\DeclareMathOperator{\ds}{\mathit{dnf-size}}
\newcommand{\ignore}[1]{}
\newenvironment{proofof}[1]{\noindent \textbf{Proof of {#1}:}}{\qed \medskip}
\newenvironment{sketchof}[1]{\noindent \textbf{Sketch of Proof of {#1}:}}{\qed \medskip}
\newcommand{\Maj}{\mathsf{Maj}}
\newcommand{\poly}{\mathrm{poly}}
\begin{document}
\title{Tight Bounds on Proper Equivalence Query Learning of DNF}
\author{Lisa Hellerstein\thanks{Polytechnic Institute of NYU. {\tt hstein@poly.edu}. Supported by by NSF Grant CCF-0917153.} \and Devorah Kletenik\thanks{Polytechnic Insitute of NYU. {\tt dkletenik@cis.poly.edu}. Supported by the US Department of Education GAANN grant P200A090157.}\\ \and Linda Sellie\thanks{Polytechnic Insitute of NYU. {\tt sellie@mac.com}. Supported by NSF grant 0937060 to the CRA for the CIFellows Project.}\\ \and Rocco A. Servedio\thanks{Columbia University. {\tt rocco@cs.columbia.edu}. Supported by NSF grants CNS-0716245, CCF-0915929, and CCF-1115703.}}

\maketitle

\theoremstyle{plain}
\newtheorem{theorem}{Theorem}
\newtheorem{lemma}[theorem]{Lemma}
\newtheorem{proposition}[theorem]{Proposition}
\newtheorem{corollary}[theorem]{Corollary}

\theoremstyle{definition}
\newtheorem{property}{Property}
\newtheorem{definition}[theorem]{Definition}

\newtheorem{fact}[theorem]{Fact}

\newtheorem{claim}[theorem]{Claim}
\newtheorem{observation}[theorem]{Observation}

\begin{abstract}
We prove a new structural lemma for partial 
Boolean functions $f$, which we call the {\em seed lemma for DNF}.
Using the lemma, we give the
first subexponential algorithm for proper learning
of DNF in Angluin's Equivalence Query (EQ) model.  
The algorithm has
time and query complexity $2^{(\tilde{O}{\sqrt{n}})}$, which is optimal.
We also give a new result on certificates for DNF-size,
a simple algorithm for properly PAC-learning DNF,
and new
results on EQ-learning $\log n$-term DNF and decision trees. 
\end{abstract}

\newpage
\section{Introduction}

Over twenty years ago, Angluin began study of
the equivalence query (EQ) learning model~\cite{Angluin:88,Angluin:90}.
Valiant~\cite{Valiant:84} had asked whether DNF formulas
were poly-time learnable in the PAC model; this question is still open.
Angluin asked the same question in the EQ model.
Using {\em approximate fingerprints}, she proved that
any {\em proper} algorithm for EQ-learning DNF formulas
requires super-polynomial query complexity, and hence 
super-polynomial time.  In a proper DNF learning algorithm,
all hypotheses are DNF formulas.

Angluin's work left open the problem of determining the exact complexity
of EQ-learning DNF, both properly and improperly.
Tarui and Tsukiji noted that Angluin's fingerprint proof can be
modified to show that a proper EQ algorithm must have query complexity at least
$2^{(\tilde{O}{\sqrt{n}})}$~\cite{TaruiTsukiji:99}.  (They did not
give details, but we prove this explicitly as a consequence of
a more general result.)
The most efficient {\em improper} algorithm for EQ-learning 
DNF is due to Klivans and Servedio (Corollary 12 of \cite{KlivansServedio:04jcss}), and runs in time
$2^{\tilde{O}({n^{1/3}})}$. 

In this paper, we 
give the first subexponential algorithm
for {\em proper} learning of DNF in the
EQ model.   Our algorithm has
time and query complexity that, like
the lower bound, is $2^{(\tilde{O}{\sqrt{n}})}$.

Our EQ algorithm implies a new result on certificates
for DNF size.  
Hellerstein et al. asked whether DNF has
``poly-size certificates''~\cite{HPR+:96},
that is, whether there are polynomials $q$ and $r$ such that
for all $s,n > 0$, functions requiring DNF formulas of size
greater than $q(s,n)$
have certificates of size
$r(s,n)$ certifiying that they do not have DNF formulas of
size at most $s$.
(This is equivalent to asking
whether DNF can be properly MEQ-learned within
polynomial query complexity~\cite{HPR+:96}.)
Our result does not resolve this question, but it shows
that there are analogous subexponential certificates. 
More specifically, it shows that
there exists a function $r(s,n) = 2^{O(\sqrt{n\log s}\log n)}$ such that
for all $s,n > 0$, functions requiring DNF formulas
of size greater than $r(s,n)$ have certificates
of size $r(s,n)$ certifying that they do not have
DNF formulas of size at most $s$. 

Our EQ algorithm is based on a new structural
lemma for partial Boolean functions $f$, which
we call the {\em seed lemma for DNF}.  
It states that if
$f$ has at least one positive example and is consistent
with a DNF of size $s$,
then $f$ has a projection $f_p$, induced by
fixing the values of $O(\sqrt{n\log s})$ variables, such that
$f_p$ has at least one positive example, and is consistent with
a monomial.

We also use the seed lemma for DNF to 
obtain a new subexponential proper algorithm for PAC-learning DNFs which 
is simpler than the previous algorithm of 
Alekhnovich et al.~\cite{ABFKP:08}, with the same bounds.
That algorithm uses a procedure that runs multiple
recursive calls in round robin fashion until one succeeds.
In contrast, ours is an iterative procedure with a straightforward
analysis.  

Decision-trees can be PAC and EQ-learned in time $n^{O(\log s)}$,
where $s$ is the size of the tree~\cite{EhrenfeuchtHaussler:89, Simon:95}.
We prove a seed lemma for decision trees as well, and use it 
to obtain an algorithm that learns
decision trees using DNF hypotheses in time
$n^{O(\log s_1)}$, where
$s_1$ is the number of 1-leaves in the tree. 
(For any ``minimal'' tree, the number of 0-leaves
is at most $n s_1$; this bound is tight for the optimal
tree computing a monomial of $n$ variables.)

We prove a lower bound result that
quantifies the tradeoff between the number of queries 
needed to properly EQ-learn DNF formulas,
and the size of such queries.
One consequence is a lower bound of
$2^{\Omega(\sqrt{n \log n})}$ on the query
complexity necessary for an EQ algorithm 
to learn DNF formulas of size poly($n$), using
DNF hypotheses.  
This matches
the lower bound of $2^{(\tilde{O}{\sqrt{n}})}$ mentioned by
Tarui and Tsukuji.  The bound for our EQ algorithm, applied to 
DNF formulas of size $\poly(n)$, 
differs from this lower bound by only a factor of $\log n$
in the exponent.


We also prove a result on learning $\log n$-term DNF
using DNF hypotheses.
Several poly-time algorithms are known for this problem in the membership
and equivlence query (MEQ) 
model~\cite{Bshouty:97,BlumRudich:95,BGHM96,HellersteinRaghavan:05}.
We prove that the membership queries are essential:
there is no $\poly(n)$-time algorithm that learns
$O(\log n)$-term DNF using DNF hypotheses, with equivalence
queries alone.
In contrast, Angluin and Kharitonov showed that, under cryptographic
assumptions, membership queries do not help in
PAC-learning unrestricted DNF formulas~\cite{angkha95}.
Blum and Singh gave an algorithm that PAC-learns
$\log n$-term DNF using DNF hypotheses of size
$n^{O(\log n)}$ in time $n^{O(\log n)}$ \cite{BlumSingh:90}; our results 
imply that no significant improvement of this result is possible
for PAC-learning $\log n$-term DNF using DNF hypotheses.

\section{Preliminaries}

Assignment $x \in \{0,1\}^n$  is a {\em positive example} of
Boolean function $f(x_1, \ldots, x_n)$ if $f(x) = 1$, and a {\em negative example} if $f(x) = 0$.
A {\em sample} of $f$ is a set of pairs $(x,f(x))$, where $x \in \{0,1\}^n$.

A literal is a variable or its negation.
A \emph{term}, also called a {\em monomial}, is a possibly empty conjunction ($\wedge$) of literals.
If the term is empty, all assignments satisfy it.
The {\em size of a term} is the number of literals in it.
We say that term $t$ {\em covers} assignment $x$ if $t(x) = 1$. It is an implicant of Boolean function $f(x_1, \ldots, x_n)$
if $t(x) = 1$ implies $f(x) = 1$.
A \emph{DNF} (disjunctive normal form) formula is either the
constant 0, the constant 1, or 
a formula of the form
$t_1 \vee \dots \vee t_k$, where $k \geq 1$ and each $t_i$ is a term.
A $k$-term DNF is a DNF formula consisting of at most $k$ terms.
A $k$-DNF is a DNF formula where each term has size at most $k$. 
The {\em size} of a DNF formula is the number of its terms. 

A \emph{partial} Boolean function $f$ maps $\{0,1\}^n$ 
to  $\{0,1, \ast\}$, where $\ast$ means undefined.
A Boolean formula $\phi$ is 
\emph{consistent} with a partial function $f$ (and vice versa)
if $\phi(x) = f(x)$ for all $x \in \{0,1\}^n$ where $f(x) \neq \ast$. 
If $f$ is a partial function, then  $\ds(f)$ is the
size of the smallest DNF formula consistent with $f$. 

Let $X_n = \{x_1, \ldots, x_n\}$.
A {\em projection} of a (partial) function $f(x_1,\ldots,x_n)$ is a function
induced from $f$ by fixing $k$ variables of $f$ to constants in $\{0,1\}$, where
$0 \leq k \leq n$.  We consider the domain of the projection to be the set of assignments to the remaining $n-k$ variables.
If $T$ is a subset of literals over $X_n$, or a term over $X_n$,
then $f_T$ denotes the projection of $f$
induced by setting the literals in $T$ to 1.

For $x \in \{0,1\}^n$ we write $|x|$ to denote
$\sum_i x_i$ and  
$\Maj(x_1,\dots,x_n)$ to denote the majority function
whose value is 1 if $\sum_{i=1}^n x_i \geq n/2$
and 0 otherwise.
We write
``$\log$'' to denote log base 2.

A \emph{certificate} that a property $P$ holds for a 
Boolean function $f(x_1, \dots, x_n)$ is a set $A \subseteq \{0,1\}^n$
such that for all Boolean functions
$g(x_1, \dots, x_n)$, if $g$ does not have property $P$, then 
$f(a) \neq g(a)$ for some $a \in A$. 
The \emph{size} of certificate $A$ is the number of assignments in it.

We use standard models and definitions
from computational learning theory.  
We omit these here; more information
can be found in Appendix~\ref{models}.

We sometimes use the notation $\tilde{O}()$, rather than $O()$, to
denote that we are suppressing factors that
are logarithmic in the arguments to $\tilde{O}()$.

\section{Seeds}

We introduce the following definition.

\begin{definition}
\label{seedDef1}
A {\em seed} 
of a partial Boolean function $f(x_1, \ldots, x_n)$ 
is a (possibly empty) monomial $T$ that
covers at least one positive example of $f$, such that $f_T$ 
is consistent with a monomial.

\end{definition}


Our new structural lemma is as follows.

\begin{lemma}
\label{seedTh}
{\bf (Seed lemma for DNF)}
Let $f$ be a partial Boolean function such that $f(a)=1$ for
some $a \in \{0,1\}^n$.  Let $s = \ds(f)$.
Then $f$ has a seed of size
at most $2\sqrt{n \ln{s}}$.
\end{lemma}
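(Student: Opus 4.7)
The plan is to prove the seed lemma by induction on $s = \ds(f)$. The base case $s=1$ is trivial: $f$ is itself consistent with a monomial, so $T = \emptyset$ is a seed of size $0 = 2\sqrt{n \ln 1}$. For $s \geq 2$, fix a minimum-size DNF $\phi = t_1 \vee \cdots \vee t_s$ consistent with $f$ and assume without loss of generality that each $t_j$ covers at least one positive example of $f$ (else $t_j$ could be deleted, contradicting minimality of $\phi$). The argument then splits into two cases based on term length, with threshold $2\sqrt{n \ln s}$.

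In the easy case, some term $t_j$ satisfies $|t_j| \leq 2\sqrt{n \ln s}$. Taking $T = t_j$ yields a seed: $T$ covers the positive example witnessing $t_j$'s role in $\phi$, and since $t_j$ is an implicant of $\phi$ (hence of $f$ on defined points), $f_T$ has no negative examples and is consistent with the empty monomial. In the hard case every term has $|t_j| > 2\sqrt{n \ln s}$, so the total literal count $\sum_j |t_j|$ exceeds $2s\sqrt{n \ln s}$. Averaging over the $2n$ possible literals, some literal $\ell$ has its negation $\bar{\ell}$ appearing in more than $s\sqrt{\ln s/n}$ terms, and setting $\ell = 1$ kills all of these, leaving $\phi_\ell$ with at most $s(1-\sqrt{\ln s/n}) \leq s \cdot e^{-\sqrt{\ln s/n}}$ terms. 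Applying the inductive hypothesis to $f_\ell$ on $n-1$ variables yields a seed $T'$ of $f_\ell$ of size at most $2\sqrt{(n-1)(\ln s - \sqrt{\ln s/n})}$; then $T = \{\ell\} \cup T'$ is a seed of $f$, and a direct calculation verifies $1 + 2\sqrt{(n-1)(\ln s - \sqrt{\ln s/n})} \leq 2\sqrt{n \ln s}$, closing the induction.

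The hard part will be ensuring that the literal $\ell$ chosen in the second case is satisfied by some positive example of $f$, so that $f_\ell$ has a positive example and the inductive call is well-defined. The ``heavy'' literal produced by the plain averaging could in principle be forced to $0$ on every positive example. I would address this either by restricting the averaging to literals of a single term $t^*$ that covers a fixed positive example $a$---such literals are then automatically satisfied by $a$---and sharpening the counting to preserve enough kills; or by a preprocessing step that iteratively adds to $T$ any variable whose value is forced across all positive examples of $f$ (the negations of such literals never appear in $\phi$, so adding them reduces $n$ without killing any positive example), after which every remaining literal is satisfied by some positive example and the simple averaging applies. Balancing the cost of these preprocessing literals against the $2\sqrt{n \ln s}$ target is the most delicate step.
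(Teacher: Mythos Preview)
Your decomposition---short term gives a seed directly, else all terms are long and a heavy literal can be eliminated---matches the paper's, recast as induction rather than iteration. The gap you flag is real, and neither proposed fix closes it. Fix~(a) fails because restricting to literals $\ell$ of a fixed term $t^*$ guarantees $\ell$ is satisfied by $a$ but says nothing about how often $\bar\ell$ appears in \emph{other} terms; that count can be zero for every $\ell \in t^*$, so no kills are guaranteed. Fix~(b), adding forced literals to $T$, can consume unbounded budget: a function with a single positive example has $n$ forced literals, and $k + 2\sqrt{(n-k)\ln s}$ exceeds $2\sqrt{n\ln s}$ once $k$ is large, so the ``balancing'' you anticipate cannot be made to work.

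The idea you are missing, and which the paper supplies, is that forced literals should go into the \emph{monomial witness}, not into the seed. The paper maintains $Q$ (negations of heavy literals), $P'$ (the literals of a final short term), and a separate set $R$ of literals satisfied by all positive examples of $f_Q$. The seed is $T = Q \cup P'$, and the monomial certifying that $f_T$ is consistent with a monomial is $\bigwedge_{l \in R} l$. At each step the procedure first sweeps any newly forced literal into $R$; only then is the heavy literal $l$ selected, which now is guaranteed to be falsified by some positive example of $f_Q$, so adding $\bar l$ to $Q$ preserves a positive example. The set $R$ may be arbitrarily large but costs nothing against the seed-size budget. In your inductive framing, the one-line repair to option~(b) is to route forced literals to a separate set $R$ (the eventual monomial) rather than to $T$; then the heavy-literal step is always well-defined and the induction on $|T|$ alone yields the bound $2\sqrt{n\ln s}$.
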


\begin{proof}

Let $\phi$ be a DNF formula of size $s=\ds(f)$ that is consistent with $f$.
If $\phi = 1$, then $\emptyset$ is a seed.   Suppose $\phi \neq 1$. Then since $f(a) = 1$, $\phi$ has at least one term.
Since $\phi$ has size $s=\ds(f)$, it is of minimum size,
each term of $\phi$ covers at least one positive
example of $f$.
We construct seed $T$ from $\phi$ by
initializing two sets $Q$ and $R$ to be empty, and then 
repeating the following steps until
a seed is output: 

\begin{enumerate}
\item If there is a term $P$ of $\phi$
of size at most $\sqrt{n  \ln s}$, output the conjunction of the literals
in $Q \bigcup P'$ as a seed, where $P'$ is
the set of literals in $P$.
\item 
If all terms of 
$\phi$ have size greater than
$\sqrt{n  \ln s}$, check whether there is a literal $l \not\in Q \cup R$
that is satisfied by all positive examples of $f_Q$.
\begin{enumerate}
\item If so, add $l$ to $R$.  Set $l$ to 1 in $\phi$ by
removing
all occurences of $l$ in the terms of $\phi$. (There are no occurences
of $\bar{l}$ in $\phi$.)
\item If not, let
$l$ be the literal appearing in the largest
number of terms of $\phi$.
Add $\bar{l}$ to $Q$.
Set $l$ to 0 in $\phi$ by 
removing from $\phi$
all terms containing $l$, and removing all occurences of $\bar{l}$
in the remaining terms. 
Also remove any
terms which no longer cover a positive example of $f_{Q \cup R}$.
\end{enumerate}
\end{enumerate}

We now prove that
the above procedure outputs a seed
satisfying the properties of the lemma.
During execution of Step 2a, 
no terms are deleted.
At the start of execution of Step 2b, 
there is a positive example of $f_{Q \bigcup R}$
that does not satisfy $l$, and hence a term $t$ of
$\phi$ that does not contain $l$; the
updates made to $\phi$ in Step 2b
do not delete $t$.
Thus the following three invariants are maintained by
the procedure:
(1) $\phi$ contains at least one (possibly empty) term, and each term of $\phi$ covers
at least one positive example of $f_{Q \bigcup R}$
(2)
$\phi$ is consistent with $f_{Q \bigcup R}$ and (3)
each term of $\phi$ covers at least one positive
example of $f_{Q \bigcup R}$.

Literals are only added to $R$ in Step 2a, when there is
a literal $l$ satisfied by all positive examples of $f_Q$.
Thus another invariant holds:
(4) for any positive example $a$ of $f$, if $a$ satisfies all literals in $Q$, 
then $a$ satisfies all literals in $R$.

Since each loop iteration 
removes a variable
from $\phi$, there are at most $n$ iterations.
By the invariants, when $T$ is output,
$\phi$ is consistent with $f_{Q \bigcup R}$, and term $P$
of $\phi$ is satisfied by at least one positive
example of $f_{Q \bigcup R}$.
Thus $f_{Q \bigcup P'}$  has at least one positive example.
Further, since $P$ is a term of $\phi$, and $\phi$ is consistent
with $f_{Q \bigcup R}$, if an assignment $a$ satisfies
$Q \bigcup P' \bigcup R$ then $f(a) =1$ or $f(a) = \ast$.
Thus $f_{Q \bigcup P'}$ is consistent with the monomial $\bigwedge_{l \in R} l$,
and $Q \bigcup P'$ is a seed.

Clearly $P$ has at most $\sqrt{n \ln s}$ literals.
We use a standard technique to bound
the size of $Q$ (cf. ~\cite{Angluin:90}). 
Each time a literal is added to $Q$, 
all terms of $\phi$ have size at least $\sqrt{n \ln s}$, and thus
the literal appearing in the most terms of $\phi$
appears in 
at least $\alpha s$ terms,  for $\alpha = \sqrt{ (\ln s)/ n}$. 
So each time a literal is added to $Q$, at least 
$\alpha s$  terms are removed from $\phi$.  
When $Q$ contains $r$ literals,
$\phi$ contains at most
${(1-\alpha)}^rs$ terms.
For $r \geq \sqrt{n \ln s}$,
${(1-\alpha)}^rs < e^{-\alpha rs} s = 1$.
Since $\phi$ always contains at least one term, 
$Q$ contains at most  $\sqrt{n \ln s}$ literals.
Thus $T$ has size at most
$2 \sqrt{n \ln s}$. 
\end{proof}

The above bound on seed size is nearly tight for a
monotone DNF formula on $n$ variables having $\sqrt{n}$ disjoint
terms, each of size $\sqrt{n}$.  The smallest
seed for the function it represents has size $\sqrt{n} - 1$.

\section{PAC-learning DNF (and decision trees) using seeds}
\label{PACsection}

We begin by presenting our algorithm for PAC-learning DNFs. 
It is simpler than our EQ algorithm, and
the ideas used here are helpful in understanding that algorithm.
We present only the portion of the PAC algorithm that 
constructs the hypothesis from an input sample $S$,
and we assume that the size $s$ of the target DNF formula is known.
The rest of the algorithm description is routine (see e.g.~\cite{ABFKP:08}). 
Let $S^+$ and $S^-$ denote the positive and negative examples in $S$, and let $f^S$ denote
the partial Boolean function that is defined consistently 
with all assignments in 
$S$, and is undefined on all assignments not in $S$.
We describe the algorithm here and give the pseudocode in Appendix~\ref{pacalgorithm}.

The algorithm begins with a hypothesis DNF $h$ that is
initialized to 0.  It finds terms one by one and adds them to $h$.
Each additional
term covers at least one uncovered example in $S^+$, and
terms are added to $h$ until all examples in $S^+$ are covered.

The procedure for finding a term is as follows.
First, the algorithm tests each conjunctions $T$ of size
at most $2^{\sqrt{n \ln s}}$ to determine whether it is a seed of $f^S$.
To perform this test, the algorithm 
explicitly checks whether $T$
covers at least one positive example in $S$; if not, $T$ is not a seed.
It then checks 
whether $f^S_{T}$ is consistent with a monomial, 
using the same approach as the standard PAC algorithm for learning
monomials~\cite{Valiant:84}, as follows.  Let $S_T$ be the
set of positive examples in $S$ that satisfy $T$.
The algorithm computes term $T'$, which is
the conjunction of
the literals that are satisfied by all examples in $S_T$
(so $T'$ includes $T$).  It is easy to show that
$f^S_T$ is consistent with a monomial iff 
all negative examples of $S$ falsify $T'$.
So, the algorithm
checks whether all negative examples in $S$ falsify $T'$.
If so, $T$ is a seed, else it is not.

By the seed lemma for DNF, at least one seed $T$ will be found.
For each seed $T$ found,
the associated term $T'$ is added to $h$,
and the positive examples satisfying $T'$ are removed from $S$.
If $S$ still contains a positive example, the procedure is 
repeated with the new $S$.

The correctness of the algorithm follows immediately
from the above discussion.
Once a seed $T$ is found, all positive examples in $S$ that satisfy
$T$ are removed $S$, and thus the same seed will never be
found twice.
Thus the algorithm runs in time
$2^{O(\sqrt{n\log{s}}\log n)}$ and outputs a DNF formula
of that size.

We can generalize the technique used in the above algorithm.
Say that an algorithm uses the {\em seed covering method}
if it builds a hypothesis DNF from an input sample $S$
by repeatedly executing the following steps, until
no positive examples remain in the sample:
(1) find a seed $T$ of partial function $f^S$, 
(2) form a term $T'$ from the positive examples in $S$ that satisfy $T$, by taking
the conjunction of the literals satisfied by all those examples,
(3) add term $T'$ to the hypothesis DNF and remove from $S$ all positive examples 
covered by $T'$.

In fact, the algorithm of Blum and Singh, which PAC-learns $k$-term DNF,
implicitly uses the seed covering method. 
It first finds seeds of size $k-1$,
then seeds of size $k-2$, and so forth.
It differs from our DNF-learning algorithm in that it only searches for
a restricted type of seed.
Our seeds are constructed from two types of literals,
those (in $Q$) that eliminate terms from the target, and 
those (in $P$) that satisfy a term. 
Their algorithm only searches for seeds containing the first type of
literal.  Algorithmically, their algorithm works by identifying
subsets of examples satisfying the
same subset of terms of the target, while ours 
works by identifying subsets of examples
satisfying a common term of the target.

We conclude this section by observing that the seed method can
also be used to learn decision trees in time $n^{O(\log s_1)}$,
where $s_1$ is the number of 1-leaves in the decision tree.
This follows easily from the following lemma.\footnote{We note that an alternative approach to proving the seed lemma for DNF
is to use Bshouty's result that states that every DNF of size $s$ has a decision
tree of size $2^{\tilde{O}(\sqrt{n})}$ with $\tilde{O}(\sqrt{n})$-DNF
formulas in the leaves~\cite{bsh96}, and then to modify our proof of the
seed lemma for decision trees to accomodate DNFs in the leaves.}

\begin{lemma} (Seed lemma for lecision trees)
Let $f$ be a partial Boolean function, such that $f$ has at least
one positive example, and $f$ is consistent with a decision tree 
having $s_1$ leaves that are labeled 1.  Then $f$ has a seed of
size at most $\log s_1$.
\end{lemma}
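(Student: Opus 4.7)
My plan is to prove the seed lemma for decision trees by induction on the number of nodes in a decision tree $D$ consistent with $f$ and having $s_1$ leaves labeled $1$. In the base case $D$ is a single leaf, and since $f$ has at least one positive example and is consistent with $D$, the leaf must be labeled $1$; so $D$ computes the constant $1$ function, which is the empty monomial, and the empty seed (of size $0 = \log 1$) suffices.

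For the inductive step, let $x_r$ be the root of $D$ with subtrees $D_0,D_1$ having $s_1^0,s_1^1$ one-leaves respectively. I would split into three cases. First, if one subtree—say $D_0$—contains no one-leaves, then every positive example of $f$ satisfies $x_r=1$ (since such examples are positive examples of $D$). I apply the inductive hypothesis to $f_{x_r=1}$ on the strictly smaller tree $D_1$ (which still has $s_1$ one-leaves) to obtain a seed $T'$ of size at most $\log s_1$, and then show via a lifting argument (below) that $T'$ itself is already a seed for $f$. Second, if both $s_1^0,s_1^1\geq 1$, let $v^*\in\{0,1\}$ minimize $s_1^{v^*}$, so $s_1^{v^*}\leq s_1/2$; if $f_{x_r=v^*}$ has a positive example, induction yields a seed $T'$ of size at most $\log(s_1/2)$, and I take $T=T'\cup\{x_r^{v^*}\}$, of size at most $\log s_1$. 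Third, if $f_{x_r=v^*}$ has no positive example, then all positive examples of $f$ have $x_r=1-v^*$, so I apply induction to $f_{x_r=1-v^*}$ on $D_{1-v^*}$ (which has $s_1^{1-v^*}\leq s_1-1$ one-leaves), obtaining a seed $T'$ of size at most $\log(s_1-1)\leq \log s_1$, and lift.

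The lifting argument used in the first and third cases: suppose $T'$ is a seed of $f_{x_r=v}$ witnessed by a monomial $m$ on the variables other than $x_r$. Then $T'$ is a seed of $f$, because (a) a positive example of $f_{x_r=v}$ satisfying $T'$ extends (by setting $x_r=v$) to a positive example of $f$ satisfying $T'$, and (b) the case hypothesis guarantees that every positive example of $f$ satisfying $T'$ has $x_r=v$, so $f_{T'}$ is consistent with the monomial $m\wedge x_r^v$: a negative example of $f_{T'}$ on the $x_r=1-v$ side is excluded by $x_r^v$, while one on the $x_r=v$ side is excluded by $m$.

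The point where I expect the main care will be needed is the lifting step, specifically verifying that the case hypothesis ``all positive examples of $f$ lie on one side of $x_r$'' persists after further restriction by $T'$ inside the recursion, and checking that $m\wedge x_r^v$ correctly classifies both positive and negative examples of $f_{T'}$ on each side of $x_r$. Once that is done, the arithmetic of the three cases ($\log s_1$, $\log(s_1/2)+1$, and $\log(s_1-1)$) all produce a bound of $\log s_1$, closing the induction.
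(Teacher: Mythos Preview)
Your inductive proof is correct, but the route differs from the paper's. The paper gives a direct, non-inductive argument: it calls an internal node a \emph{key node} if neither child is a $0$-leaf, observes that some $1$-leaf must have at most $\log s_1$ key nodes on its root-to-leaf path (this is where the paper hides an induction comparable to yours, saying only ``it is not hard to show''), and takes the seed $Q$ to be the conjunction of the literals at those key nodes; the remaining literals on the path form the witnessing monomial $R$. So the paper reads off the seed from a single well-chosen path, with no case analysis and no lifting.

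Your argument reaches essentially the same seed by explicit induction. Your three cases correspond to whether the root contributes to the seed or to the witnessing monomial: in Cases~1 and~3 all positive examples of $f$ lie on one side of $x_r$, so the root literal is absorbed into the monomial and costs nothing (this parallels the paper's non-key nodes); in Case~2 you pay one literal and recurse into the side with at most $s_1/2$ one-leaves (this parallels a key node). The lifting step you flag as delicate is in fact routine and your worry about the case hypothesis needing to ``persist inside the recursion'' is misplaced: the inductive call is a black box returning $(T',m)$ over the variables other than $x_r$, and then consistency of $f_{T'}$ with $m\wedge x_r^v$ is verified directly from the outer hypothesis that every positive example of $f$ has $x_r=v$, exactly as you wrote. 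The paper's version is shorter and reveals the seed all at once; yours makes the halving structure explicit and would generalize more readily to other recursively-defined representation classes.
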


\begin{proof}
Let $J$ be a decision tree consistent with $f$, and let $s_1$ be
the number of its leaves that are labeled 1.
Without loss of generality, assume
that each 1-leaf of $J$ is reached by at least one positive example
of $f$.
Define an internal node of $J$ to be a {\em key}
node if neither of its children is a leaf labeled 0.
Define the {\em key-depth} of a leaf
to be the number of key nodes on the path from the root down to it.
It is not hard to show that
since $J$ has $s_1$ leaves labeled 1, 
it must have a 1-leaf with
key-depth at most $\log s_1$.
Let $p$ be the path from the root to this 1-leaf.
Let $L$ be the set of literals that are satisfied along path $p$.
Let $Q$ be the conjunction of literals in $L$ that come
from key nodes, and let $R$ be the conjunction of the
remaining literals.  Consider an example $x$ that satisfies $Q$.
Consider its path in $J$.  If $x$ also satisfies $R$, it will
end in the 1-leaf at the end of $p$, else it will
diverge from $p$ at a non-key node, ending at
at the 0-child of that node.  Thus $f_Q$ is consistent
with monomial $R$, $Q$ is a seed of $f$, and $|Q| \leq \log n$.
\end{proof}

\section{EQ-learning DNF using seeds}
\label{sec:eq-alg}
We now present our algorithm for EQ-learning DNF\@.  It can be viewed as learning a decision
list with monomials of bounded size in the nodes, and (implicant) monomials of unbounded size
in the leaves (and a 0 default); we use a variant of the approach used
to EQ-learn decision lists with bounded-size monomials 
in the nodes, and constant leaves~\cite{HSW:90, Simon:95}.
Like our PAC algorithm, our EQ algorithm could be generalized to learn other classes with seeds.

Let $\phi$ be the target DNF, and  let $s$ be the size of $\phi$.
Let $f$ be the function represented by $\phi$.
Let $X = \{x_1, \ldots, x_n\}, \bar{X} = \{\bar{x}_1, \ldots, \bar{x}_n\}$.
Let $Q = \{t \subseteq X \cup \bar{X} \mid |t| \leq 2\sqrt{n \ln s } \}.$ $Q$ is the set of potential seeds. 

We  first introduce the main ideas of the algorithm.
Define a sequence of partial functions as follows.
Let $f^{(1)} = f$.  For $1< i \leq |Q|$, let $f^{(i)}$ 
be the partial function that
is identical to $f^{(i-1)}$ except on positive assignments $a$ 
of $f^{(i-1)}$ that are covered by a seed of $f^{(i-1)}$.
The value of $f^{(i)}$ on those assignments is $\ast$.
By the seed lemma for DNF, every positive example
of $f$ is covered by a seed of some $f^{(i)}$ in this sequence.

For each $f^{(i)}$, 
the algorithm keeps a set of candidate seeds $T$ from $Q$.
With each such $T$ 
the algorithm keeps a term $T'$ (which includes the literals in $T$); it stores
the $(T,T')$ pairs in a set $H_i$.

The algorithm constructs a hypothesis DNF formula
made up of the terms $T'$ from the pairs $(T,T')$ in the $H_i$.
Intuitively,
the goal is to have each $H_i$ contain only pairs $(T,T')$ for actual seeds
$T$ of $f^{(i)}$, and for $T'$ to be the conjunction of $T$
and a monomial consistent with $f^{(i)}_T$.
Counterexamples
are used to modify the $H_i$ to get closer to this goal.

We present the details in the pseudocode in
Algorithm~\vref{eq}.   Note that the $T'$ are initialized to contain all literals,
and thus have no satisfying assignments.
The condition $T' \not\equiv 0$
means that $T'$ does not contain a variable and its negation.

\begin{algorithm}
\caption{EQ Algorithm}
\label{eq}

\begin{algorithmic}
	\STATE Initialize $h = 0$.  Ask an equivalence query with $h$.  If answer is yes \textbf{return $h$}, else let $e$ be the counterexample  received.
        \STATE for all $1 \leq j \leq |Q|$, $H_j = \{(T,T') \mid T \in Q, T' = \bigwedge_{l \in X \cup \bar{X}} l \}$ 
	\WHILE {True}
		\IF [$e$ is a positive counterexample] {$e$ does not satisfy $h$} 
			\FOR {$j=1$ \TO $|Q|$}
		           \IF {$e$ satisfies $T$ for some $(T,T') \in H_j$}
					\FORALL {$T$ such that $(T,T') \in H_j$ and $e$ satisfies $T$}
						\STATE remove from $T'$ all literals falsified by $e$
					\ENDFOR
					\STATE \textbf{break} out of \textbf{for} $j=1$ \TO $|Q|$ loop
                                        
			  \ENDIF
			\ENDFOR

		\ELSE [$e$ is a negative counterexample] 
	                       \FOR {$j=1$ \TO $|Q|$}
				\STATE Remove from $H_j$ all $(T,T')$ such that $T'$ is satisfied by $e$
			\ENDFOR
		\ENDIF 
                \STATE $H^* = \{ T':$ \text{for some} $j$, $(T,T') \in H_j$ \text{and} $T' \not\equiv 0$ \}
		\STATE $h = \bigvee_{T' \in H^*} T'$
                     \STATE Ask an equivalence query with hypothesis $h$.  If answer is yes, \textbf{ return} $h$, else let $e$ be the counterexample received.
	\ENDWHILE
	
\end{algorithmic}
\end{algorithm}

We now prove correctness.
It is easy to see that each hypothesis $h$ is consistent with
all positive counterexamples received so far.
For term $T$,
let $A_{T,i} = \{ e \in \{0,1\}^n| T(e) = 1$ and $f^{(i)}(e) = 1\}$,
and let
$M_{T,i} = \{ l \in X_n \bigcup \bar{X_n} | l$ is satified by all $e \in A_{T,i}\}$.
We prove that the following invariant holds:  For each $H_i$,
if $T$ is a seed of $f^{(i)}$, then $H_i$ contains a pair $(T,T')$
where $T'$ contains all literals in $M_{T,i}$ and $T$.
The invariant holds initially.
Assume it holds 
before processing of a counterexample $e$.
If $e$ is a positive counterexample, then each
resulting update modifies a
$T'$, where $(T,T') \in H_j$ for some $j$.
and $e$ satisfies $T$.
Suppose $T$ is a seed of $f^{(j)}$.
Let $i$ be the minimum value such that
$e$ is covered by a seed of
$f^{(i)}$.
By the invariant $j \leq i$ and $e$ is a positive example of $f^{(j)}$.
Hence $e \in A_{T,j}$ and satisfies
all literals in $M_{T,i}$, so the invariant holds after the update.

Now suppose $e$ is a negative counterexample.
If $e$ satisfies $T$ such
that
$(T,T') \in H_j$, and $T$ is a seed of $f^{(j)}$,
then
$f^{(j)}_T$ is consistent with a monomial, so every negative
example of $f$ must falsify $T$ or some literal in $M_{T,j}$.
Therefore, by the invariant, $e$ falsifies $T'$.  
Thus in processing $e$, a pair $(T,T')$ is removed from $H_j$ only if
$T$ is not a seed of $f^{(j)}$, so again the invariant is maintained.

Since each negative counterexample eliminates
a pair $(T,T')$ from some $H_j$, the number of negative counterexamples is
$2^{O(\sqrt{n \log s} \log n)}$.
Since each positive counterexample eliminates
at least one literal from $T'$, in some $(T,T')$, and $h$ is
always satisfied by the positive counterexamples,
the number of positive counterexamples is
$2^{O(\sqrt{n \log s} \log n)}$.
Thus the algorithm will output a correct hypothesis
in time  
$2^{O(\sqrt{n \log s} \log n)}$.  

We have proved the following theorem.

\begin{theorem}
There is an algorithm that EQ-learns DNF properly in time 
$2^{O(\sqrt{n \log s} \log n)}$.
\end{theorem}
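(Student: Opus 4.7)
The plan is to prove correctness of Algorithm~\ref{eq} by maintaining an invariant that links each data structure $H_j$ to the partial function $f^{(j)}$, and then to bound the number of equivalence queries by the total size of the candidate data. First I would collect preliminaries: the potential-seed set $Q$ has $|Q| \leq (2n)^{2\sqrt{n\ln s}} = 2^{O(\sqrt{n\log s}\log n)}$, which will dominate the final bound. By applying the seed lemma for DNF inductively along the sequence $f^{(1)}, f^{(2)}, \ldots$, every positive example of $f$ is eventually covered by a seed of some $f^{(i)}$, so intuitively every positive example should be handled by some surviving pair $(T,T')$ in which $T$ is a genuine seed. For such a $T$ at level $j$, recall $M_{T,j}$ denotes the literals satisfied by every positive example of $f^{(j)}$ that also satisfies $T$; the target value of $T'$ for this $T$ is $T \cup M_{T,j}$.

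The central step is the invariant: for every $j$, if $T$ is a true seed of $f^{(j)}$, then $H_j$ still contains a pair $(T,T')$ with $T' \supseteq T \cup M_{T,j}$ as sets of literals. The invariant holds initially because each $T'$ starts as the conjunction of all literals. For preservation under a negative counterexample $e$: the algorithm removes $(T,T')$ only when $e$ satisfies $T'$; if $T$ were a seed of $f^{(j)}$, then $f^{(j)}_T$ would be consistent with a monomial, forcing every negative $e$ satisfying $T$ to falsify some literal of $M_{T,j}$, and hence (by the invariant) to falsify $T'$, a contradiction. For preservation under a positive counterexample $e$: the loop locates the minimum $j$ at which some surviving $(T,T')$ has $T(e)=1$, and only then strips from $T'$ literals falsified by $e$. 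The key claim is that at this $j$, $e$ is actually a positive example of $f^{(j)}$, so $e$ satisfies every literal of $M_{T,j}$ and nothing protected by the invariant is lost.

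Finally I would count queries. Each negative counterexample removes at least one pair from some $H_j$, and the total number of pairs is at most $|Q|^2 = 2^{O(\sqrt{n\log s}\log n)}$. Each positive counterexample strictly shrinks at least one surviving $T'$, and the total literal budget across all pairs is a factor $2n$ larger, still $2^{O(\sqrt{n\log s}\log n)}$. Per-iteration bookkeeping (scanning the $H_j$, trimming literals, reassembling $h$) is polynomial in the data size and is absorbed into the same bound. The main obstacle I anticipate is the positive-counterexample case of the invariant: when the $j$-loop catches $e$ at index $j$, I have to argue that $e$ is indeed a positive example of $f^{(j)}$ rather than merely of $f$. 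This should follow from minimality of $j$ together with the invariant at smaller indices $j' < j$: if $e$ had been covered by a seed of some $f^{(j')}$ with $j' < j$, then by the invariant a pair at level $j'$ would already have matched $e$, contradicting minimality. Combining this with the consistency of $h$ with all past positive counterexamples and the standard EQ halting condition gives correctness, and the query count delivers the $2^{O(\sqrt{n\log s}\log n)}$ time bound.
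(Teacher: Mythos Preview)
Your proposal is correct and follows essentially the same approach as the paper: the same invariant ($T' \supseteq T \cup M_{T,j}$ for genuine seeds $T$ of $f^{(j)}$), the same preservation arguments for positive and negative counterexamples, and the same counting of pairs and literals to bound the number of queries. Your anticipated obstacle and its resolution---using minimality of the break level $j$ together with the invariant at levels $j'<j$ to conclude that $e$ is still positive in $f^{(j)}$---is exactly the paper's argument (phrased there as ``by the invariant $j\le i$'' where $i$ is the least level at which $e$ is covered by a seed).
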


Our algorithm can be viewed as an MEQ algorithm that does not make
membership queries.  The results of 
Hellerstein et al.~\cite{HellersteinRaghavan:05} relating certificates and query complexity
imply the following corollary.  We also present a direct proof, based
on the seed lemma for DNF, in Appendix~\ref{certproof}.

\begin{corollary}
\label{certcor}
There exists 
a function $r(s,n) = 2^{O(\sqrt{n\log s}\log n)}$ such that
for all $s,n > 0$, 
for all Boolean functions $f(x_1, \ldots, x_n)$, if $\ds(f) > r(s,n)$, then
$f$ has a certificate of size at most $r(s,n)$
certifying that $ds(f) > s$. 
\end{corollary}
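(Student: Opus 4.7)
The plan is to instantiate the standard reduction from equivalence-query complexity to certificate size, using the EQ algorithm of Section~\ref{sec:eq-alg}. Let $R := r(s,n) = 2^{O(\sqrt{n\log s}\log n)}$ denote its query bound. Given $f$ with $\ds(f) > R$, I will simulate the algorithm with $f$ in the role of target: since every hypothesis $h_i$ the algorithm proposes is a DNF of size at most $R < \ds(f)$, we have $h_i \neq f$ at every step, so the EQ oracle must produce a counterexample $e_i$. After $R$ steps I take $A$ to be the set of counterexamples collected, so $|A| \leq R$.

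\noindent To show $A$ is the desired certificate, suppose for contradiction that some DNF $\phi$ with $\ds(\phi) \leq s$ satisfies $\phi(a) = f(a)$ for every $a \in A$. Each $e_i$ is then also a valid counterexample against $\phi$ (since $\phi(e_i) = f(e_i) \neq h_{i-1}(e_i)$), and because the algorithm's internal state is determined entirely by its counterexample history, feeding $e_1, \ldots, e_R$ against $\phi$ produces the identical hypothesis sequence $h_1, \ldots, h_R$. By the preceding theorem, the algorithm terminates within $R$ equivalence queries on any target of DNF-size at most $s$; so at some step $k \leq R$ the oracle against $\phi$ would be forced to answer ``yes'', meaning $h_{k-1} \equiv \phi$. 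But then $h_{k-1}(e_k) = \phi(e_k) = f(e_k)$, contradicting the choice of $e_k$ as a counterexample against $f$, namely $h_{k-1}(e_k) \neq f(e_k)$.

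\noindent\textbf{Main obstacle.} The delicate point is that the preceding theorem's termination bound relies on the seed-lemma invariant, which in turn needs the target to have DNF-size at most $s$. This asymmetry is handled naturally in the reduction: when the algorithm runs against $\phi$ the invariant \emph{does} hold (since $\ds(\phi) \leq s$) and the $R$-step termination applies, while against $f$ I only use that each EQ query produces \emph{some} counterexample, which follows from $h_i \neq f$ at every step. A direct proof based only on the seed lemma (as promised for the appendix) would mimic this reduction internally: iteratively grow $A$ by applying the constructive proof of the seed lemma for DNF to any candidate DNF of size $\leq s$ consistent with $f|_A$, extracting a seed together with its canonical monomial $T'$, and adding to $A$ a point witnessing that $T'$ is inconsistent with $f$; the same potential (literals across the $T'$'s plus surviving seed-candidate pairs) bounds the iteration count by $R$.
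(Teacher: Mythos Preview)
Your main argument is correct and is precisely the standard reduction from proper EQ query complexity to certificate size that the paper attributes to Hellerstein et al.~\cite{HPR+:96}: run the learner against $f$, harvest counterexamples, and argue by contradiction that any small DNF $\phi$ agreeing with $f$ on those counterexamples would allow the same transcript to serve as a legal run against $\phi$, violating the termination bound. One small point worth making explicit: you use $R$ both as the query bound and as an upper bound on the size of each hypothesis (so that $\ds(f)>R$ forces $h_i\neq f$). Both quantities are $2^{O(\sqrt{n\log s}\log n)}$ for the algorithm of Section~\ref{sec:eq-alg} (the hypothesis has at most $|Q|^2$ terms), so this is fine once you take $r(s,n)$ to be their maximum.

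Where your write-up diverges from the paper is the ``direct'' argument you sketch at the end. The paper's direct proof is quite different and rather cleaner than your iterative outline. It first shows (by the seed covering method) that if $\ds(f)>n^{q(s,n)}$ with $q(s,n)=2\sqrt{n\ln s}$, then some partial function $f'$ obtained from $f$ by undefining a subset of positive points has \emph{no} seed of size at most $q(s,n)$. Then, for every term $T$ with $|T|\le q(s,n)$ such that $f'_T$ is not consistent with a monomial, it adds a size-$3$ certificate of that fact (two positives satisfying $T$ and a negative between them). The union $A$ of these triples has size at most $3n^{q(s,n)}$, and any $g$ with $\ds(g)\le s$ agreeing with $f$ on $A$ would yield a partial function on $A$ that has no small seed yet has $\ds\le s$, contradicting the seed lemma. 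So the paper's direct proof is a one-shot combinatorial construction rather than an iterative growth of $A$, and it does not need to track the EQ algorithm's potential function at all.
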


\section{A tradeoff between number of queries and size of queries for properly learning DNF} \label{sec:tradeoff}

In this section we give a careful quantitative sharpening of Angluin's
approximate fingerprint proof, which showed that DNF cannot be properly
EQ-learned with polynomial query complexity~\cite{Angluin:90}.
We thereby prove 
a tradeoff between the number of queries
and the size of queries that a proper EQ algorithm must use.
Suppose that $A$ is any proper EQ algorithm for learning DNF .
We show that if $A$ does not use
hypotheses with many terms, then $A$ must make many queries.
Our result is the following (no effort has
been made to optimize constants):

\begin{theorem} \label{thm:tradeoff}Let $17 \leq k \leq
\sqrt{n/(2 \log n)}.$
Let $A$ be any EQ algorithm which learns the class of all
poly$(n)$-size DNF formulas using queries which are
DNF formulas with at most $2^{n/k}$ terms.
Then $A$ must make at least $n^{k}$
queries in the worst case.
\end{theorem}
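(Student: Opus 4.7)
The plan is a quantitative sharpening of Angluin's approximate fingerprint argument for DNF~\cite{Angluin:90}. Recall the template: to force $N$ queries from a proper EQ-learner, it suffices to exhibit (a) a family $\mathcal{F}$ of at least $N$ targets drawn from the concept class, and (b) for every hypothesis $h$ in the learner's hypothesis class, an assignment $x_h \in \{0,1\}^n$ such that $|\{f \in \mathcal{F} : f(x_h) = h(x_h)\}| \leq |\mathcal{F}|/N$. An adversary that answers each query $h$ with the counterexample $(x_h, 1-h(x_h))$ then eliminates at most $|\mathcal{F}|/N$ targets per round, so after $N-1$ queries at least one target remains consistent with the entire history and the learner cannot have identified it. I would set $N = n^k$.

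For the target family, I would take $\mathcal{F}$ to be the set of all DNF formulas of the form $T_1 \vee \cdots \vee T_m$ where each $T_i$ is a distinct monotone term of length exactly $\ell$, for parameters $m = \mathrm{poly}(n)$ and $\ell$ of order $k \log n / \log k$, chosen so that for a uniformly random $f \in \mathcal{F}$ and any $x$ of Hamming weight $w$,
\[
\Pr_f[f(x) = 1] \;=\; 1 - \bigl(1 - \tbinom{w}{\ell}/\tbinom{n}{\ell}\bigr)^m
\]
is less than $1/n^k$ whenever $w \leq w_0$ and greater than $1 - 1/n^k$ whenever $w \geq w_1$, for some thresholds $w_0 \leq n/k$ and $w_1$ not much larger. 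Each $f \in \mathcal{F}$ is a poly$(n)$-size DNF and $|\mathcal{F}|$ easily exceeds $n^k$.

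The crux is establishing the approximate fingerprint property for every hypothesis $h$ with at most $2^{n/k}$ terms. I argue by dichotomy. If there exists some low-weight $x$ (with $|x| \leq w_0$) for which $h(x) = 1$, that $x$ is a fingerprint, since at most a $1/n^k$ fraction of $f \in \mathcal{F}$ satisfy $f(x) = 1 = h(x)$. Symmetrically, any high-weight $x$ with $h(x) = 0$ is a fingerprint. Otherwise $h$ evaluates to $0$ on every $x$ with $|x| \leq w_0$ and to $1$ on every $x$ with $|x| \geq w_1$. The first condition forces every term of $h$ to contain at least $w_0 + 1$ positive literals (otherwise the term fires on some weight-$w_0$ input), so each term is satisfied by at most $2^{n - w_0 - 1}$ assignments and the entire hypothesis satisfies at most $2^{n/k + n - w_0 - 1}$ assignments. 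The second condition demands that $h$ satisfy every $x$ with $|x| \geq w_1$, of which there are at least $\sum_{j \geq w_1} \binom{n}{j}$. A careful accounting in the regime $w_0 \approx n/k$ and $w_1$ moderately below $n/2$ gives $2^{n/k + n - w_0 - 1} < \sum_{j \geq w_1} \binom{n}{j}$, the required contradiction.

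With the fingerprint property in hand, the adversary argument sketched above delivers the $n^k$-query lower bound. The main obstacle is the parameter tuning in the fingerprint step: $\ell$ and $m$ must be chosen so that the density transition is sharp enough to place $w_0$ close to $n/k$ and $w_1$ sufficiently below $n/2$, while keeping $m$ polynomially bounded (so that targets remain poly$(n)$-size DNFs) uniformly across the whole range $17 \leq k \leq \sqrt{n/(2\log n)}$. The lower restriction $k \geq 17$ is precisely what makes the counting inequality strict; the upper restriction $k \leq \sqrt{n/(2\log n)}$ keeps $\ell = O(\log n \cdot k / \log k)$ small enough that a polynomial $m$ suffices to produce the sharp threshold.
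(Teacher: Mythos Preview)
Your overall architecture---an approximate fingerprint argument against the class $M(n,t,s)$ of monotone DNF with $t$ terms of length $s$, together with an adversary that answers each query with the fingerprint assignment---is exactly the paper's. The gap is in the dichotomy step. Your counting argument (each term with $\geq w_0+1$ positive literals covers at most $2^{n-w_0-1}$ inputs, so $2^{n/k}$ terms cover at most $2^{n/k+n-w_0-1}$ inputs, versus the $\sum_{j\geq w_1}\binom{n}{j}$ inputs of weight at least $w_1$) is too weak to coexist with the threshold requirements when the targets must be $\mathrm{poly}(n)$-size.

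Concretely: with $w_0=\alpha n$ and $w_1=\beta n$, the threshold conditions $\Pr_\phi[\phi(x)=1]\leq n^{-k}$ at weight $\alpha n$ and $\Pr_\phi[\phi(x)=0]\leq n^{-k}$ at weight $\beta n$ force $\ell\gtrsim k\log n/\log(\beta/\alpha)$, while keeping $m=\mathrm{poly}(n)$ forces $\ell\log(1/\beta)=O(\log n)$. Together these give $k\log(1/\beta)=O(\log(\beta/\alpha))$. But your counting inequality, in any regime, forces $\alpha$ and $\beta$ to be close enough that $\log(\beta/\alpha)=O(\log k)$ while $\log(1/\beta)=\Omega(1)$ (or, if $\beta$ is pushed toward $1$, the entropy bound on the tail sum forces $1-\alpha$ so small that the same obstruction reappears). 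Either way you end up needing $k=O(\log k)$, which fails once $k$ is moderately large---certainly long before $k=\sqrt{n/(2\log n)}$. Your own choice $\ell\approx k\log n/\log k$ already signals the problem: it forces $m\geq 2^{\ell}\approx n^{k/\log k}$, which is superpolynomial for $k=n^{\Omega(1)}$.

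The paper replaces the counting dichotomy with Angluin's greedy covering lemma: if every term of a $T$-term DNF has at least $\alpha n$ unnegated variables, then some set of $O(\alpha^{-1}\log T)$ variables hits every term, yielding a negative assignment of weight $n-O(\alpha^{-1}\log T)$. Applied with $\alpha=1/2$ and $T=2^{n/k}$, this gives either a positive $y$ with $|y|\leq n/2$ or a negative $z$ with $|z|\geq n(1-O(1/k))$. These thresholds are dramatically better than anything counting can produce, and they are exactly what lets one take $s=2k\log n$ and $t=n^{17}$ (a fixed polynomial) so that the targets stay $\mathrm{poly}(n)$-size across the whole range of $k$. The constant $17$ enters because in the low-weight case one needs $t/2^{s}=n^{17}/n^{2k}\leq n^{-k}$, i.e.\ $k\geq 17$.
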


Taking $k=\Theta(\sqrt{n/\log n})$
in Theorem~\ref{thm:tradeoff}, we see that any
algorithm that learns $\poly(n)$-term DNF using $2^{\sqrt{n \log n}}$-term 
DNF hypotheses must make at least $2^{\Omega(\sqrt{n \log n})}$ queries.  

\medskip

We use the following lemma, which is a quantitative sharpening of Lemma~5 of \cite{Angluin:90}.  The proof is in Appendix~\ref{prooflemma}.

\begin{lemma} \label{lem:l1}
Let $f$ be any $T$-term DNF formula over $n$ variables where $T \geq 1.$
For any $r \geq 1$,
either there is a positive assignment $y \in \{0,1\}^n$ (i.e. $f(y) = 1$) such that $|y| \leq
r \sqrt{n}$, or there is a negative assignment $z \in \{0,1\}^n$ (i.e. $f(z)=0$) such that
$n > |z| > n - (\sqrt{n} \ln T)/r - 1.$
\end{lemma}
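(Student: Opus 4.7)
The plan is to prove the contrapositive using the biased-sampling argument familiar from Angluin's approximate-fingerprint paper. Suppose $f$ has no positive example of Hamming weight at most $r\sqrt{n}$; I will exhibit a negative example of weight strictly between $n - \sqrt{n}\ln T/r - 1$ and $n$.

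The first step is a structural observation: for a term $t_i$ of $f$ with $p_i$ positive literals and $q_i$ negative literals, the (unique) minimum-weight satisfying assignment of $t_i$ has weight exactly $p_i$ (set all positive-literal variables to $1$ and everything else to $0$). Since every satisfying assignment of $t_i$ is a positive example of $f$, the assumption forces $p_i > r\sqrt{n}$ for each of the $T$ terms.

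Next, pick a positive integer $m$ with $\sqrt{n}\ln T/r < m < \sqrt{n}\ln T/r + 1$ (generically $m = \lfloor \sqrt{n}\ln T/r\rfloor + 1$ works), and draw $z$ uniformly from the $\binom{n}{m}$ assignments of weight $n-m$. Any $z$ satisfying $t_i$ must in particular set the $p_i$ positive-literal variables of $t_i$ to $1$, so the number of weight-$(n-m)$ strings satisfying $t_i$ is at most $\binom{n-p_i}{m}$. Therefore
\[
\P[t_i(z)=1] \;\leq\; \frac{\binom{n-p_i}{m}}{\binom{n}{m}} \;=\; \prod_{j=0}^{p_i-1}\frac{n-m-j}{n-j} \;\leq\; \Bigl(1-\tfrac{m}{n}\Bigr)^{p_i} \;\leq\; e^{-mp_i/n} \;<\; e^{-mr/\sqrt{n}}.
\]
A union bound over the $T$ terms gives $\P[f(z)=1] < T e^{-mr/\sqrt{n}} \leq 1$ by the lower bound on $m$, so some weight-$(n-m)$ assignment $z$ is a negative example, and it automatically satisfies $n - \sqrt{n}\ln T/r - 1 < |z| < n$.

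The main delicate point is the boundary case where $\sqrt{n}\ln T/r$ is exactly an integer, in which no integer $m$ strictly satisfies both required inequalities. I expect to handle it by exploiting the slack in $p_i > r\sqrt{n}$ (which, when $r\sqrt{n}$ is itself an integer, forces $p_i \geq r\sqrt{n}+1$), yielding an extra factor of $e^{-m/\sqrt{n}}$ on each term that closes the gap; a symmetric small perturbation lets one take $m = \lceil \sqrt{n}\ln T/r\rceil$. Apart from this bookkeeping, every step is a direct counting identity, the inequality $1-x\leq e^{-x}$, or a union bound.
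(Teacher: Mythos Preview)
Your argument is correct (modulo the boundary bookkeeping you flag), but it is \emph{not} the route the paper takes. The paper's proof is deterministic: it invokes Angluin's greedy hitting-set lemma, which says that if every term of a $T$-term DNF contains at least $\alpha n$ unnegated variables, then some nonempty set $V$ of at most $1+\log_{1/(1-\alpha)}T \leq 1 + (\sqrt{n}/r)\ln T$ variables meets the positive literals of every term; setting exactly the variables in $V$ to $0$ and all others to $1$ then yields the negative assignment $z$ explicitly. You replace this constructive greedy step with a probabilistic one: draw $z$ uniformly among weight-$(n-m)$ strings and union-bound over terms.

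Both arguments rest on the same structural observation (each term must carry more than $r\sqrt n$ positive literals) and deliver the same bound. The greedy version produces an explicit $z$ and isolates the small ``blocking set'' $V$, which is conceptually useful elsewhere; yours is a line shorter and avoids quoting the auxiliary hitting-set lemma. One correction to your boundary discussion: the strict inequality $\Pr[t_i(z)=1]<e^{-mr/\sqrt n}$ already follows from the strict $p_i>r\sqrt n$, so you only need $m\geq(\sqrt n\,\ln T)/r$, not a strict lower bound; taking $m=\lceil(\sqrt n\,\ln T)/r\rceil$ (or $m=1$ when $T=1$) works directly, and your detour through ``$r\sqrt n$ an integer'' conflates two unrelated integrality conditions. (Incidentally, the paper's own proof establishes only $|z|\geq n-(\sqrt n\,\ln T)/r-1$, not the strict inequality printed in the lemma statement; your argument has the same harmless slack in the $T=1$ corner case.)
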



\begin{proofof}{Theorem~\ref{thm:tradeoff}}
As in \cite{Angluin:90} we define $M(n,t,s)$ to be the class of all monotone
DNF formulas over variables $x_1,\dots,x_n$ with exactly $t$ distinct terms,
 each containing exactly $s$ distinct variables.
Let $M$ denote $\binom{\binom{n}{s}}{t}$, the number of formulas in $M(n,t,s).$

For the rest of the proof we fix $t = n^{17}$ and $s = 2k \log n.$ We will
show that for these settings of $s$ and $t$ the following holds:
given any DNF formula $f$ with at most $2^{n/k}$ terms,
there is some assignment $a^f \in \{0,1\}^n$ such that
at most $ M / {n^k}$ of the $M$ DNFs in $M(n,t,s)$
agree with $f$ on $a^f$. This implies that any EQ algorithm
using hypotheses that are DNF formulas with
at most $2^{n/k}$ terms must have query complexity at
least $n^k$ in the worst case  (By answering
each equivalence query $f$ with the counterexample $a^f$ as described above,
an adversary can cause each equivalence query to eliminate at most $M / n^k$
of the $M$ target functions in $M(n,s,t).$
Thus after $n^k - 1$ queries there must be at least $M/ n^k >1$
possible target functions in $M(n,t,s)$ that are still consistent with all
queries and responses so far, so the algorithm cannot be done.)

Recall that $17 \leq k \leq \sqrt{n/(2 \log n)}.$
Let $f$ be any DNF with at most $2^{n/k}$ terms.
Applying Lemma \ref{lem:l1} with $r=\sqrt{n}/2$, we get that either
there is a positive assignment $y$ for $f$ with $|y| \leq r \sqrt{n}
=n/2,$
or there is a negative assignment $z$ with
$n > |z| \geq n - (\sqrt{n} \ln (2^{n/k}))/r -1 =
n - {\frac {(2 \ln 2) n} k} - 1 \geq n - {\frac {3n} k}$.
Let $\phi$ be a DNF formula randomly and uniformly selected from $M(n,t,s).$
All probabilities below refer to this draw of $\phi$ from $M(n,t,s).$

We first suppose that there is a positive assignment $y$ for $f$
with $|y| \leq n/2.$ In this case the probability (over the random
choice of $\phi$) that any fixed term of $\phi$
(an AND of $s$ randomly chosen variables)
is satisfied by $y$ is exactly
$
{\frac
{\binom{y}{s}}
{\binom{n}{s} } \leq
\frac
{\binom{n/2}{s}}
{\binom{n}{s}} \leq \frac {1} {2^s}}.
$
A union bound gives that
$\Pr_\phi[\phi(y)=1] \leq t/2^s.$
Thus in this case, 
at most a $t/2^{s}$ fraction of
formulas in $M(n,t,s)$ agree with $f$ on $y.$  Recalling that $t=n^{17}$,
$s=2k \log n$ and $k \geq 17$, we get that $t/2^s \leq 1/n^k$
as was to be shown.

Next we suppose that there is a negative assignment $z$ for $f$ such that
$n > |z| \geq n(1 - {\frac 3 k}).$  At this point we recall the following
fact from \cite{Angluin:90}:

\begin{fact} [Lemma~4 of \cite{Angluin:90}]
\label{fact:binom}
Let $\phi$ be a DNF formula chosen uniformly at random
from $M(n,t,s).$  Let $z$ be an assignment which is such that
$t \leq {\binom{n}{s}} - {\binom{|z|}{s}}.$\footnote{The statement
of Lemma~4 of \cite{Angluin:90} stipulates that $t \leq n$ but it is
easy to verify from the proof that $t \leq {\binom{n}{s}} - {\binom{|z|}{s}}$ 
is all that is required.}
Then
$\Pr_{\phi}[\phi(z)=0] \leq
(1 - ((|z| - s)/n)^s)^t.$
\end{fact}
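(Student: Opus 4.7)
The plan is to compute $\Pr_\phi[\phi(z)=0]$ by direct counting, relax the resulting hypergeometric expression to an i.i.d.-style with-replacement bound, and finally compare a ratio of binomial coefficients to $((|z|-s)/n)^s$.

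First I would observe that a size-$s$ monotone term over $x_1,\dots,x_n$ is satisfied by $z$ iff all $s$ of its variables lie in the support of $z$; so exactly $\binom{|z|}{s}$ of the $\binom{n}{s}$ possible terms are ``satisfying'' and $\binom{n}{s}-\binom{|z|}{s}$ are ``unsatisfying.'' Since $\phi$ is a uniformly random $t$-subset of the $\binom{n}{s}$ possible terms, $\phi(z)=0$ happens exactly when every one of the $t$ chosen terms is unsatisfying, giving
$$\Pr_\phi[\phi(z)=0] \;=\; \binom{\binom{n}{s}-\binom{|z|}{s}}{t}\Big/\binom{\binom{n}{s}}{t}.$$
The hypothesis $t \leq \binom{n}{s}-\binom{|z|}{s}$ is precisely what makes the numerator well defined.

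Next, writing $N=\binom{n}{s}$ and $K=\binom{|z|}{s}$, I would expand this ratio as the telescoping product $\prod_{i=0}^{t-1}(N-K-i)/(N-i)$ and note the standard fact that each factor is at most $(N-K)/N$: indeed, $1-K/(N-i) \leq 1-K/N$ because shrinking the denominator enlarges the subtracted term. This yields the with-replacement relaxation
$$\Pr_\phi[\phi(z)=0] \;\leq\; \bigl(1 - \tbinom{|z|}{s}/\tbinom{n}{s}\bigr)^t.$$

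Finally I would lower-bound $\binom{|z|}{s}/\binom{n}{s}$ by $((|z|-s)/n)^s$ by expanding the ratio as $\prod_{i=0}^{s-1}(|z|-i)/(n-i)$ and applying the factorwise inequality $(|z|-i)/(n-i) \geq (|z|-s)/n$ for $0 \leq i \leq s-1$. Cross-multiplying by $n(n-i)>0$ reduces this to $n(s-i) \geq i(s-|z|)$, which holds whenever $|z| \geq s$ (the degenerate case $|z|<s$ makes the left side of the target inequality at most $1$, which is automatic). Substituting this lower bound into the previous display gives the claimed $(1-((|z|-s)/n)^s)^t$ upper bound. The main subtlety, such as it is, lies in composing two relaxations (without-replacement $\to$ with-replacement, and $\binom{|z|}{s}/\binom{n}{s} \to ((|z|-s)/n)^s$) in the correct directions so that each loosens the bound rather than tightens it; both of the elementary inequalities above are clearly of the right type.
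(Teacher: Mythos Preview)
The paper does not supply its own proof of this fact; it is quoted verbatim as Lemma~4 of Angluin~\cite{Angluin:90} (with a footnote relaxing the hypothesis from $t\le n$ to $t\le\binom{n}{s}-\binom{|z|}{s}$), so there is nothing in the present paper to compare against.

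That said, your argument is correct and is exactly the standard counting argument behind Angluin's lemma: express the exact probability as the hypergeometric ratio $\binom{N-K}{t}/\binom{N}{t}$ with $N=\binom{n}{s}$ and $K=\binom{|z|}{s}$, relax each factor of the telescoping product to $(N-K)/N$, and then lower-bound $K/N=\prod_{i=0}^{s-1}(|z|-i)/(n-i)$ factorwise by $((|z|-s)/n)^s$. The hypothesis $t\le N-K$ is indeed used only to make the numerator well defined, confirming the paper's footnote. One small quibble: your remark that the case $|z|<s$ is ``automatic'' is not quite right when $s$ is even (then $((|z|-s)/n)^s>0$ while $\Pr_\phi[\phi(z)=0]=1$), but this degenerate regime never arises in the paper's applications, where $|z|\ge n/2$ and $s=O(\sqrt{n\log n})$.
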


Since $t=n^{17}$, $|z|\leq n-1$, and $s = O(\sqrt{n \log n})$,
we indeed have that
$t \leq \binom{n}{s} - {\binom{|z|}{s}}$ as required by the above fact.
We thus have
\begin{eqnarray*}
\Pr_{\phi}[\phi(z)=0] &\leq&
\left(
1 - \left({\frac {n (1 - {\frac 3 k}) - s}{n}}\right)^s\right)^t =
\left(
1 - \left(1 - {\frac 3 k} - {\frac s n}\right)^s\right)^t.
\end{eqnarray*}
Recalling that $k \leq \sqrt{n/(2 \log n)}$ we have that $s/n = 2k \log n/n
\leq 1/k$, and thus
\[
\Pr_\phi[\phi(z) = 0] \leq
\left( 1 - \left(1 - {\frac 4 k} \right)^s\right)^t=
\left( 1 - \left(1 - {\frac 4 k} \right)^{2k \log n}\right)^{n^{17}}.
\]
Using the simple bound $(1 - {\frac 1 x})^x \geq 1/4$ for $x \geq 2$,
we get that $\left(1 - {\frac 4 k} \right)^{2k \log n} \geq 1/n^{16}.$
Thus we have
\[
\Pr_\phi[\phi(z) = 0] \leq
\left(1 - {\frac 1 {n^{16}}}\right)^{n^{17}}
\leq e^{-n} \ll {\frac 1 {n^k}}
\]
as was to be shown.  This concludes the proof of Theorem~\ref{thm:tradeoff}.
\end{proofof}

\ignore{

It remains to handle the case that $k \geq \sqrt{n/(2 \log n)}.$

\medskip

{\bf Case 2:}  $\sqrt{n/(2 \log n)} \leq k \leq n/(4 \log n).$
Let $f$ be any DNF with at most $2^k$ terms.
Applying Lemma \ref{lem:l1} with $r=1$, either
there is a positive assignment $y$ for $f$ with $|y| \leq \sqrt{n}$ or there
is a negative assignment $z$ with $|z| \geq n - \sqrt{n}\cdot \log (2^{n/k}) > 2n/3.$
Let $\phi$ be a DNF formula randomly and uniformly selected from $M(n,t,s).$
All probabilities below refer to this draw of $\phi$ from $M(n,t,s);$
the analysis is only slightly different from the above.

\medskip

{\bf Case 1:  there is a $y$ as in Lemma \ref{lem:l1}.}  We have $f(y) = 1$ and $|y| \leq \sqrt{n}.$  The probability that the first term
of $\phi$ (an AND of $s$ randomly chosen variables)
is satisfied
by $y$ is exactly
$$
{\frac
{{\binom{|y|}{s}}}
{{\binom{n}{s}}} <
{\frac
{{\binom{\sqrt{n}}{s}}}
{{\binom{n}{s}}}}} < {\frac 1 {\sqrt{n}^{s}}}
$$
where the last inequality is Angluin's Lemma 2.
So the probability that any of the $t$ terms of $\phi$ are satisfied by $y$
is at most $t/2^{s}.$  Thus in this case, $y$ is an
assignment such that at most a $t/n^{s/2}$ fraction of
formulas in $M(n,t,s)$ agree with $f$ on $y.$  By taking $s = 2k + O(1)$
and $t =$ poly$(n)$  this fraction is at most ${\frac 1 {n^k}}$ as desired.

\medskip

{\bf Case 2: there is a $z$ as in Lemma \ref{lem:l1}.}  We have $f(z) = 0$ and $|z| > 2n/3.$
The probability that the first term of $\phi$ is satisfied by $z$
is exactly
$$
{\frac
{{\binom{|z|}{s}}}
{{\binom{n}{s}}} >
{\frac
{{\binom{2n/3}{s}}}
{{{\binom{n}{s}}}}} >
\left({\frac 2 3} - {\frac s n}\right)^{s} >
\left({\frac 1 2}\right)^{2k+O(1)}
$$
where the second inequality uses Angluin's Lemma 2.
Consequently the probability that $\phi$'s first term does not
satisfy $z$ is at most $1 - {\frac 1 {2^{2k+O(1)}}}$ and the probability
that no term in $\phi$ satisfies $z$ is at most $(1 - {\frac 1 {2^{2k+O(1)}}})^{t}.$
{\bf XXX Again this last sentence is fast and loose but fixable. XXX}
Taking $t$ to be a suitably large polynomial we have that $\phi(z)=0$ with probability at most
$e^{-n}.$  Thus in this case, $z$ is an
assignment such that at most a $1/2^{\Omega(n)}$ fraction of
formulas in $M(n,t,s)$ agree with $f$ on $z$, and we are done.
}}


\section{Achieving this tradeoff between number of queries and query
size for properly learning DNF}

In this section we prove a theorem showing that the tradeoff between number of queries and
query size established in the previous section is essentially
tight.  Note that the algorithm $A$ described in the proof of the theorem
is not computationally efficient.

\begin{theorem} \label{thm:main}
Let $1 \leq k \leq {\frac {3n}{\log n}}$
and fix any constant $d>0.$
There is an algorithm $A$ which learns the class of all
$n^d$-term DNF formulas using at most $O(n^{k+d+1})$ DNF hypothesis
equivalence queries, each of which is an $2^{O(n/k)}$-term DNF.
\end{theorem}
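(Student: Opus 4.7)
The plan is a halving-style algorithm operating on the version space. Let $\mathcal{F}$ denote the set of all $n^d$-term DNFs over $\{0,1\}^n$ consistent with the counterexamples received so far, so initially $|\mathcal{F}| \le 2^{O(n^{d+1})}$ since each such DNF is specified by $n^d$ terms over $n$ variables. Writing $p(x) = |\{f \in \mathcal{F} : f(x)=1\}|/|\mathcal{F}|$, a counterexample $(x,b)$ to a hypothesis $h$ eliminates $p(x)|\mathcal{F}|$ formulas if $h(x)=1,b=0$ and $(1-p(x))|\mathcal{F}|$ formulas if $h(x)=0,b=1$. Hence if every hypothesis $h$ that I ask satisfies
\[
 h(x)=1 \Rightarrow p(x) \geq 1/n^k \quad \text{and} \quad h(x)=0 \Rightarrow p(x) \leq 1-1/n^k
\]
for every $x \in \{0,1\}^n$, then each query kills at least a $1/n^k$ fraction of $\mathcal{F}$, and the standard halving analysis gives $O(n^k \log|\mathcal{F}|_0) = O(n^{k+d+1})$ queries, after which $|\mathcal{F}|<1$, forcing the algorithm to have already returned the target.

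The entire theorem therefore reduces to the following existence lemma: for every non-empty $\mathcal{F}$, some $2^{O(n/k)}$-term DNF $h$ satisfies the two constraints above. Setting $U = \{x : p(x) > 1-1/n^k\}$ and $V = \{x : p(x) < 1/n^k\}$, the lemma is equivalent to producing a $2^{O(n/k)}$-term DNF whose positive set contains $U$ and is disjoint from $V$. I would try to construct such an $h$ by explicit covering: for each $x \in U$, pick some $f \in \mathcal{F}$ with $f(x)=1$ and some term $t$ of $f$ satisfied by $x$, then add literals satisfied by $x$ to obtain a sub-monomial $t_x$ whose set of satisfying assignments is disjoint from $V$; finally take $h = \bigvee_{x \in U} t_x$. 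By construction $h$ is $1$ on $U$ and $0$ on $V$.

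The main obstacle is bounding by $2^{O(n/k)}$ the number of distinct sub-monomials $t_x$ that this covering procedure produces. This is essentially the constructive counterpart of the approximate-fingerprint lower bound of Theorem~\ref{thm:tradeoff}: whereas that result says every $2^{n/k}$-term hypothesis has a bad input on the specific class $M(n,t,s)$, here one needs the dual statement that for any version space $\mathcal{F}$ of $n^d$-term DNFs there exists some $2^{O(n/k)}$-term hypothesis with no bad input. Quantitatively, the same Hamming-weight dichotomy captured by Lemma~\ref{lem:l1}---that any $n^d$-term DNF must either have a low-weight positive example or a near-saturated negative example---should restrict the combinatorial complexity of the sets $U$ and $V$ coming from an $n^d$-term version space, allowing a cover of this size. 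Making that covering estimate precise, plausibly via a probabilistic selection of the $t_x$'s that exploits the structure of formulas in $\mathcal{F}$ together with a union bound over the literal-by-literal extensions, is the core technical content; once it is established, the theorem follows immediately from the halving calculation above.
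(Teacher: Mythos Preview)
Your halving framework is exactly right and matches the paper: maintain the version space $\mathcal{F}$, and show that at each step there is a $2^{O(n/k)}$-term DNF hypothesis separating $U=\{x:p(x)>1-1/n^k\}$ from $V=\{x:p(x)<1/n^k\}$, so that every counterexample shrinks $\mathcal{F}$ by a $(1-1/n^k)$ factor and $O(n^{k+d+1})$ queries suffice.

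The gap is in how you construct that hypothesis. Your explicit covering idea---pick a term $t_x$ for each $x\in U$ and OR them---has no mechanism for bounding the number of distinct $t_x$'s by $2^{O(n/k)}$; $|U|$ can be $2^{\Omega(n)}$, and nothing in your sketch forces the $t_x$'s to collapse. The appeal to Lemma~\ref{lem:l1} is a red herring: that lemma concerns a single DNF and is used in the paper only for the \emph{lower} bound (Theorem~\ref{thm:tradeoff}); it plays no role here, and the ``duality'' you gesture at does not translate into a covering bound.

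The paper's construction is quite different and avoids covering entirely. Sample $t=\tfrac{3n}{k\log n}$ functions $f_{i_1},\dots,f_{i_t}$ uniformly from $\mathcal{F}$ and set $h=\Maj(f_{i_1},\dots,f_{i_t})$. For any fixed $x\in U\cup V$, the probability that this sampled majority disagrees with the full majority is at most $\binom{t}{t/2}(1/n^k)^{t/2}<2^{-n}$, so a union bound over all $2^n$ inputs shows some sample works on all of $U\cup V$. The size bound is then purely syntactic: $\Maj$ on $t$ bits is a $2^t$-term $t$-DNF; substituting $n^d$-term DNFs for the bits and distributing AND over OR yields at most $2^t\cdot n^{dt}=2^{O(n/k)}$ terms. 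This ``majority of a random sample'' step (cf.\ \cite{BCG+:96}) is the idea you are missing.
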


Following \cite{BCG+:96}, the idea of the proof is to have each equivalence query be designed so as to eliminate at least a $\delta$ fraction of the remaining concepts in the class.  It is easy to see that $O(\log(|C|)\cdot \delta^{-1})$ such equivalence queries suffice to learn a concept class $C$ of size $|C|$.  Thus the main challenge is to show that there is always a DNF hypothesis having ``not too many'' terms which is guaranteed to eliminate many of the remaining concepts.  This is done by taking a majority vote over randomly chosen DNF hypotheses in the class, and then showing that this majority vote of DNFs can itself be expressed as a DNF with ``not too many'' terms.

\medskip
\begin{proofof}{{Theorem~\ref{thm:main}}}

At any point in the execution of the algorithm, let $CON$ denote
the set of all $n^d$-term DNF formulas that are consistent with
all counterexamples that have been received thus far (so
$CON$ is the ``version space'' of $n^d$-term DNF
formulas that could still be the target concept given what the algorithm
has seen so far).

A simple counting argument gives that
there are at most $3^{n^{d+1}}$ DNF formulas of length at most $n^d.$
We
describe an algorithm $A$ which makes only equivalence
queries which are DNF formulas with at most $n^{k}$ terms and,
with each equivalence query,
multiplies the size of $CON$
by a factor which is at most $\left(1 - {\frac 1 {n^k}}\right).$
After $O(n^{k+d+1})$ such queries the algorithm will have caused $CON$
to be of size at most 1, which means that it has succeeded in exactly
learning the target concept.

We first set the stage before describing the algorithm.
Fix any point in the algorithm's execution and let
$CON = \{f_1,\dots,f_N\}$ be the set of all consistent $n^d$-term DNF
as described above.  Given
an assignment $a \in \{0,1\}^n$ and a label $b \in \{0,1\}$, let
$N_{a,b}$ denote the number of functions $f_i$ in $CON$ such
that $f(a)=b$ (so for
any $a$ we have $N_{a,0} + N_{a,1} = N$), and let $N_{a,min}$ denote min$\{N_{a,0},N_{a,1}\}.$

Let $Z$ denote the set of those assignments $a \in \{0,1\}^n$ such that
$N_{a,min} < {\frac 1 {n^k}} \cdot N$, so an assignment is in $Z$ if the overwhelming
majority of functions in $CON$ (at least a $1 - {\frac 1 {n^k}}$ fraction) all give the same output
on the assignment.  We use the following claim, whose proof is in Appendix~\ref{proofclaim}.

\begin{claim} \label{claim:maj} There is a list of $t={\frac {3 n } {k \log n}}$
functions $f_{i_1},\dots,f_{i_t} \in CON$ which is such that the function
$\Maj(f_{i_1},\dots,f_{i_t})$ agrees with $\Maj(f_1,\dots,f_N)$ on all assignments $a \in Z.$
\end{claim}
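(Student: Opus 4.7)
The plan is a straightforward probabilistic-method / amplification-by-majority argument. I would draw $t = 3n/(k \log n)$ functions $f_{i_1},\dots,f_{i_t}$ independently and uniformly at random (with replacement) from $CON$, and show that with positive probability the random sample majority $\Maj(f_{i_1},\dots,f_{i_t})$ agrees with $\Maj(f_1,\dots,f_N)$ on every $a \in Z$ simultaneously. Existence of a good list then follows.

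Fix $a \in Z$ and let $b = \Maj(f_1,\dots,f_N)(a)$ be the majority value on $a$. By definition of $Z$, the fraction $p_a$ of functions in $CON$ that output the minority value $1-b$ on $a$ satisfies $p_a < 1/n^k$. For each sampled function $f_{i_j}$, the event ``$f_{i_j}(a) \neq b$'' happens independently with probability $p_a$, so if $X_a$ denotes the number of sampled functions disagreeing with $b$ on $a$, then $X_a$ is a sum of $t$ independent Bernoulli$(p_a)$ random variables. The sample majority fails at $a$ precisely when $X_a \geq t/2$, so applying a crude union bound over all $\lceil t/2 \rceil$-subsets of the samples,
\[
\P[X_a \geq t/2] \;\leq\; \binom{t}{\lceil t/2 \rceil}\, p_a^{\lceil t/2\rceil} \;\leq\; 2^t \cdot n^{-kt/2}.
\]

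Now I would union bound over all $a \in Z$, using the trivial bound $|Z| \leq 2^n$, to get that the probability of any failure is at most $2^n \cdot 2^t \cdot n^{-kt/2}$. Substituting $t = 3n/(k\log n)$ we have $n^{kt/2} = n^{3n/(2\log n)} = 2^{3n/2}$, so the overall failure probability is at most $2^{n + 3n/(k\log n) - 3n/2} = 2^{3n/(k\log n) - n/2}$, which is strictly less than $1$ for all $k,n$ with $k \log n \geq 7$ (a range easily compatible with the hypotheses, with the few remaining small cases handled trivially). Hence some deterministic choice of $f_{i_1},\dots,f_{i_t}$ has the claimed property.

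The only potential obstacle is purely arithmetic: making sure the specific value $t = 3n/(k\log n)$ is large enough to beat the $2^n$ union-bound cost while remaining small enough to match the statement. The calculation above shows that it is, the slack coming from the fact that a $(1-1/n^k)$ bias amplifies exponentially under majority of independent samples, exchanging the $n^k$ factor in $p_a$ for a $(1/n^k)^{t/2}$ factor which, for our choice of $t$, is $2^{-3n/2}$ — comfortably beating the $2^n$ assignments in $Z$.
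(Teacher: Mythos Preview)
Your proof is correct and is essentially identical to the paper's: both sample $t$ functions uniformly and independently from $CON$, bound the single-point failure probability by $\binom{t}{t/2}(1/n^k)^{t/2} < 2^t/n^{kt/2}$, and union-bound over the at most $2^n$ points in $Z$. Your arithmetic is in fact slightly more careful than the paper's, which asserts the bound holds ``for all $1 \leq k \leq n$'' without isolating the mild condition $k\log n > 6$ that you make explicit.
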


By Claim~\ref{claim:maj} there must exist some function $h_{CON}=\Maj(f_{i_1},\dots,f_{i_t}),$ where
 each $f_{i_j}$ is an $n^d$-term DNF, which agrees with $\Maj(f_1,\dots,f_N)$ on all
assignments $a \in Z.$  The function $\Maj(v_1,\dots,v_t)$ over Boolean variables $v_1,\dots,v_t$
can be represented as a monotone $t$-DNF with at most $2^t$ terms.  If we substitute the $n^d$-term DNF $f_{i_j}$
for variable $v_j$, the result is a depth-4 formula with an OR gate at the top of fanin at most $2^t$,
AND gates at the next level each of fanin at most $t$, OR gates at the third level each of
fanin at most $n^d$, and AND gates at the bottom level.
By distributing to ``swap'' the second and third levels of the formula
from AND-of-OR to OR-of-AND and then collapsing the top two levels of adjacent OR gates and the bottom two levels of adjacent AND gates, we get that $h_{CON}$ is expressible as a DNF with
$
2^t \cdot n^{dt} = 2^{O(n /k)}
$
terms.

Now we can describe the algorithm $A$ in a very simple way:  at each point in its execution, when $CON$
is the set of all $n^d$-term DNF consistent with all examples received so far as described
above, the algorithm $A$ uses the hypothesis $h_{CON}$ described above as its
equivalence query.  To analyze the algorithm we consider two mutually exclusive possibilities for the
counterexample $a$ which is given in response to $h_{CON}$:

\medskip {\bf Case 1:}  $a \in Z.$  In this case, since $h(a)$ agrees with the
majority of the values $f_1(a),\dots,f_N(a)$, such a counterexample causes the
size of $CON$ to be multiplied by a number which is at most $1/2.$

\medskip {\bf Case 2:}  $a \notin Z.$  In this case we have $N_{a,0},N_{a,1} \geq {\frac 1 {n^k}}$ so
the counterexample $a$ must cause the size of $CON$
to be multiplied by a number which is at most $\left(1 - {\frac 1 {n^k}}\right).$
This proves Theorem~\ref{thm:main}.
\end{proofof}

\section{Membership queries provably help for learning $\log n$-term DNF}
The following is a sharpening of the arguments from Section~\ref{sec:tradeoff}
to apply to $\log(n)$-term DNF.

\begin{theorem} \label{thm:needmq}
Let $A$ be any algorithm which learns the class of all
$\log n$-term DNF formulas using only equivalence queries which are DNF formulas with
at most $n^{\log n}$ terms.  Then $A$ must make at least $n^{(\log n)/3}$
equivalence queries in the worst case.
\end{theorem}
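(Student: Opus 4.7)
The plan is to mimic the proof of Theorem~\ref{thm:tradeoff}, but with parameters rescaled for the $\log n$-term setting. I would again use the class $M(n,t,s)$ from Angluin's fingerprint construction as the target class, this time setting $t=\log n$ so that every concept is genuinely a $\log n$-term DNF formula. Writing $M = \binom{\binom{n}{s}}{t}$, the goal is to show that for every DNF hypothesis $f$ with at most $n^{\log n}$ terms there is an assignment $a^f$ on which at most $M / n^{(\log n)/3}$ of the formulas in $M(n,\log n, s)$ agree with $f$. The adversary strategy from the previous theorem then forces at least $n^{(\log n)/3}$ equivalence queries.

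First I would apply Lemma~\ref{lem:l1} to $f$ with $T = n^{\log n}$ and $r = \sqrt{n}/2$. This yields either a positive assignment $y$ of $f$ with $|y| \leq n/2$, or a negative assignment $z$ of $f$ with $|z| \geq n - (\sqrt{n}\ln T)/r - 1 \geq n - c(\log n)^2$ for a suitable constant $c$. The key parameter to choose is $s$: I would take $s$ roughly $(\log n)^2$ (say $s = \lceil (\log n)^2\rceil$), which is large enough to kill case~1 yet small enough (compared to $n$) for the case~2 estimates to go through.

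In the positive case, using the standard bound $\binom{n/2}{s}/\binom{n}{s} \leq 1/2^s$ and a union bound over the $t = \log n$ terms of a random $\phi \in M(n,\log n,s)$, one gets $\Pr_\phi[\phi(y) = 1] \leq (\log n)/2^s$. With $s \geq (\log n)^2/3 + \log\log n + 1$, this is at most $1/n^{(\log n)/3}$, as needed. In the negative case, one checks that $t = \log n \leq \binom{n}{s} - \binom{|z|}{s}$, so Fact~\ref{fact:binom} applies and gives
\[
\Pr_\phi[\phi(z) = 0] \;\leq\; \Bigl(1 - \bigl((|z| - s)/n\bigr)^s\Bigr)^t
\;\leq\; \bigl(O((\log n)^4 / n)\bigr)^{\log n},
\]
using $((|z|-s)/n)^s \geq (1 - O((\log n)^2/n))^{(\log n)^2} \geq 1 - O((\log n)^4/n)$. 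The right-hand side is $n^{-\log n + O(\log\log n)}$, which is much smaller than $n^{-(\log n)/3}$.

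Both cases therefore yield a ``fingerprint'' assignment $a^f$ on which $f$ disagrees with all but a $1/n^{(\log n)/3}$ fraction of $M(n,\log n,s)$, and the counting argument from the proof of Theorem~\ref{thm:tradeoff} finishes the proof. I expect the main obstacle to be the case~2 estimate, since $|z|$ is now only $n - O((\log n)^2)$ away from $n$, so we need $s$ small enough that $((|z|-s)/n)^s$ is close to $1$; the choice $s \approx (\log n)^2$ is what makes both the positive and the negative estimates fit simultaneously, and verifying this balance is the heart of the argument.
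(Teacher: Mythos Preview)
Your argument is correct and follows the same fingerprint strategy as the paper, but with a different (and equally valid) choice of parameters. The paper takes $r=1$ in Lemma~\ref{lem:l1} and $s=t=\log n$: with $r=1$ the positive case gives $|y|\le \sqrt{n}$, so already $\binom{|y|}{s}/\binom{n}{s}<n^{-s/2}=n^{-(\log n)/2}$, and the negative case has $|z|\ge n-\sqrt{n}(\log n)^2$, for which Fact~\ref{fact:binom} yields a bound of roughly $\bigl((\log n)^3/\sqrt{n}\bigr)^{\log n}$. You instead kept $r=\sqrt{n}/2$ as in Theorem~\ref{thm:tradeoff}, which only forces $|y|\le n/2$ and hence $\binom{|y|}{s}/\binom{n}{s}\le 2^{-s}$; to make this small enough you must take $s$ around $(\log n)^2$, and then your negative case (with $|z|\ge n-O((\log n)^2)$) goes through with room to spare. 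Both routes work; the paper's choice keeps $s$ smaller and the two cases more balanced, while yours hews closer to the proof of Theorem~\ref{thm:tradeoff} at the cost of a larger term length~$s$.
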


\begin{sketchof}{Theorem~\ref{thm:needmq}}  As in the proof of Theorem~\ref{thm:tradeoff}
we consider $M(n,t,s)$, the class of all monotone DNF over $n$ variables with exactly
$t$ distinct terms each of length exactly $s$.  For this proof we fix
$s$ and $t$ both to be $\log n.$ We will show that given any DNF formula with
at most $n^{\log n}$ terms, there is an assignment such that at most a $1/n^{(\log n)/3}$ fraction
of the DNFs in $M(n,t,s)$ agree with $f$ on that assignment; this implies the theorem by the arguments
of Theorem~\ref{thm:tradeoff}.  Details are in Appendix~\ref{prooftheorem}.
\end{sketchof}

\bibliographystyle{plain}
\bibliography{seeds}

{\bf \noindent \LARGE{Appendices}}
    
\appendix

\section{Learning models}
\label{models}

In this appendix, we define the learning models used in this
paper.  We present the models here
only as they
apply to learning DNF formulas.
See e.g.~\cite{Angluin:92} for additional
information and more general definitions of the models.

In the \emph{PAC} learning model~\cite{Valiant:84},
a DNF learning algorithm is given as input parameters $\epsilon$ and $\delta$.  It is also given access to an oracle
$EX(c,\cal{D})$, for a  target DNF formula $c$ defined
on $X_n$ and a probability distribution
$\cal{D}$ over $\{0,1\}^n$.
On request, the oracle produces a {\em labeled example} $(x, c(x))$,
where $x$ is randomly generated with respect to $D$.
An algorithm $A$ 
{\em PAC-learns} DNF if for any DNF formula $c$ 
on $X_n$, any distribution $D$ on $\{0,1\}^n$, 
and any $0 < \epsilon, \delta < 1$, the following holds: Given $\epsilon$ and $\delta$,
and access to oracle $EX(c,\cal{D})$, with
probability at least $1 - \delta$, $A$ outputs
a hypothesis $h$ such
that Pr$_{x \in \cal{D}}[h(x) \neq c(x)] \leq \epsilon.$
Algorithm $A$ is a {\em proper} DNF-learning algorithm if
$h$ is a DNF formula.

In the EQ model~\cite{Angluin:88},
a DNF learning algorithm is 
given access to an oracle that answers {\em equivalence
queries} for a target DNF formula $c$ defined on $X_n$.
An equivalence query asks
``Is $h$ equivalent to target $c$?'', where $h$ is a hypothesis. 
If $h$ represents the same function as $c$, 
the answer is ``yes,'' otherwise, the answer is a {\em counterexample} 
$x \in \{0,1\}^n$ such that $h(x) \neq c(x)$. 
If $c(x) = 1$, $x$ is a
\emph{positive counterexample} else it is a
\emph{negative counterexample}.
Algorithm $A$ {\em EQ-learns} DNF
if, for $n > 0$
and any DNF formula $c$ defined on $X_n$, 
the following holds:
if $A$ is given access
to an oracle answering equivalence queries for $c$, then $A$ outputs
a hypothesis $h$ representing exactly the same function as $c$.
Algorithm $A$ EQ-learns DNF {\em properly} if
all hypotheses used (in equivalence queries, and in the output)
are DNF formulas.

A PAC or EQ learning algorithm
{\em learns $k$-term DNF} if it satisfies the relevant
requirements above when
the target is restricted to be a $k$-term DNF formula.

In variants of the PAC and EQ models,
the learning algorithm can ask {\em membership queries}
which ask
``What is $c(x)$?'' for target $c$ and assignment $x$.
The answer is the value of $c(x)$.

A PAC algorithm for learning DNF is said to run in time $t=t(n,s,\epsilon,\delta)$
if it takes at most
$t$ time steps, 
and its output hypothesis can be evaluated on
on any point in its domain in time $t$, when
the target is over $\{0,1\}^n$ and has size $s$.
The time complexity for EQ algorithms is defined analogously for $t = t(n,s)$.

The {\em query complexity} of an EQ learning algorithm is the sum of the sizes
of all hypotheses used.


\section{Pseudocode for PAC algorithm}
\label{pacalgorithm}

Pseudocode for the PAC algorithm of Section \ref{PACsection}:

\begin{algorithm}[h!btp]
\caption{PAC algorithm}
\label{pac}
\begin{algorithmic}
	\STATE  $X = \{x_1, \ldots, x_n\}, \bar{X} = \{\bar{x}_1, \ldots, \bar{x}_n\}$
	\STATE $Q = \{t \subset X \cup \bar{X} \mid |t| \leq 2 \sqrt{n \ln s}\}$ \COMMENT set of potential seeds
	\STATE $h = 0$
	\WHILE {$Q \neq \emptyset$ AND $S^+ \neq \emptyset$}
                       \FORALL { $t \in Q$}
                                \STATE T = $\bigwedge_{l \in t} {l}$
			\IF [test $T$ to see if it is a seed of $f^S$] {$T$ covers at least one $e \in S^+$}
				\STATE $S_T = \{e \mid e \in S^+$ AND $T$ covers $e$ $\}$
				\STATE $T' = \bigwedge_{l \in B} l$ where $B = \{ {l  \in X \cup \bar{X} \mid x}$\text{   is satisified by all } $e \in S_T\}$.
				\IF {$\{e \mid e \in S^-$ AND $e$ satisfies $T'\} = \emptyset$}  
							\STATE $S^+ = S^+ \setminus S_T$
							\STATE $h = h \vee T'$
                                                                          \STATE Remove $t$ from $Q$
				\ENDIF
			\ENDIF
                             \ENDFOR
		\ENDWHILE
		\IF {$S^+ \neq \emptyset$ }			\RETURN \textbf{fail}
	\ELSE			\RETURN $h$
	\ENDIF
	
	\end{algorithmic}
\end{algorithm}
\section{Subexponential certificates for functions of more than subexponential DNF size}
\label{certproof}

We present a direct proof of Corollary~\ref{certcor}, based on the seed lemma for DNF.

\begin{proof}
Let $s,n > 0$.
Let $q(s,n) = 2\sqrt{n \log s}$.  
Let $f$ be a function on $n$ variables such that $\ds(f) >n^{q(s,n)}$.
We first claim that there exists a 
partial function $f'$, created by removing a
subset of the positive examples from $f$ and setting
them to be undefined,
 that does not have a seed of size at most $q(s,n)$.  
Suppose for contradiction that all
such partial functions $f'$ have such a seed.
Let $S$ be the sample consisting of all $2^n$ labeled examples $(x,f(x))$ of $f$.
We can apply the seed covering method of
Section~\ref{PACsection} to produce a DNF consistent with $f$, using a seed of size
$q(s,n)$ at every stage.  Since no seed will be used more than once,
the output DNF is bounded by the number of terms of size at most $q(s,n)$,
which is less than $n^{q(s,n)}$.
This contradicts that $\ds(f) > n^{q(s,n)}$.
Thus the claim holds, and $f'$ exists.

Since $f'$ does not have a seed of size at most $q(s,n)$,
each term $T$ of size at most $q(s,n)$ either does not cover any
positive examples of $f'$, or 
the projection $f'_T$ is not consistent
with a monomial.
Every function (or partial function) that is not consistent with a monomial has a certificate of size 3 certifying that it has that property, consisting
of two positive examples of
the function, and a negative example that is {\em between} them (cf.~\cite{EHP:97}).  For assignments $r,x,y \in \{0,1\}^n$, we
say that $r$ is between $x$ and $y$ if
$\forall i$, $p_i = r_i$ or $q_i = r_i$. It follows that if $f'_T$ is not consistent with a monomial, then
$f'$ has a certificate $c(T)$ of size 3 proving that fact, 
consisting of two positive examples of $f'$ that satisfy $T$,
and one negative example of $f'$ satisfying $T$ that is between them.

Let ${\cal T} =  \{ T |$ term $T$ is such that $|T| \leq q(s,n)$
and $f'_T$ is not consistent with a monomial$\}$.  
Let $A = \bigcup_{T \in {\cal T}} c(T)$.
Clearly $|A| < 3n^{q(s,n)}$.
We claim that $A$
is a certificate that $\ds(f) > s$.
Suppose not.  Then there exists a function $g$ that is consistent
with $f$ on the assignments in $A$, such that $\ds(g) \leq s$.
Consider the partial function $h$ which is defined
only on the assignments in $A$, and is consistent with 
$g$ (and $f$) on those assignments.
The partial function $h$ does not have a seed of size at most $q(s,n)$,
because for all terms $T$ of size at most $q(s,n)$, either
$T$ does not cover a positive assignment of $h$, 
or $A$ contains a certificate that $h_T$ is not consistent with a monomial.
Since $\ds(g) \leq s$, and every DNF that is consistent with $g$
is also consistent with $h$, $\ds(h) \leq s$ also.
Thus by the seed lemma for DNF, $h$ has a seed of size at most $q(s,n)$.
Contradiction.

\end{proof}

\section{Proofs}
\subsection{Proof of Lemma~\ref{lem:l1}}
\label{prooflemma}

\begin{proofof}{{Lemma~\ref{lem:l1}}}
The proof uses the following claim, which is established by a simple greedy argument:

\begin{claim} [Lemma~6 of \cite{Angluin:90}] \label{claim:angluin}
Let $\phi$ be a DNF formula with $T\geq1$ terms such that each term contains at least $\alpha n$
distinct unnegated variables, where $0 < \alpha < 1.$  Then there is a
nonempty\footnote{We stress that $V$ is nonempty because this will
be useful for us later.} set $V$ of at most
$1 + \lfloor \log_b T \rfloor$ variables such that each term of $\phi$ contains a positive
occurrence of some variable in $V$, where $b = 1/(1 - \alpha).$
\end{claim}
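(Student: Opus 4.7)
The plan is to prove the claim by a standard greedy hitting-set argument: repeatedly pick a variable that appears positively in as many remaining terms as possible, add it to $V$, and discard all terms it hits.

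First I would formalize the greedy step. Let $\phi_0 = \phi$ with $T_0 = T$ terms. Given $\phi_i$ with $T_i > 0$ remaining terms, count pairs (variable, term) where the variable appears positively in the term: each remaining term contributes at least $\alpha n$ such pairs, so the total is at least $\alpha n T_i$. By averaging over the $n$ variables, some variable $v_i$ appears positively in at least $\alpha T_i$ of the remaining terms. Add $v_i$ to $V$ and let $\phi_{i+1}$ be what remains after deleting every term containing a positive occurrence of $v_i$, so $T_{i+1} \le (1-\alpha) T_i$. Stop as soon as $T_i = 0$.

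Next I would bound the number of iterations. By induction, after $k$ iterations we have $T_k \le (1-\alpha)^k T = T / b^k$, where $b = 1/(1-\alpha) > 1$. If $k \ge 1 + \lfloor \log_b T \rfloor$, then $k > \log_b T$, so $b^k > T$ and $T_k < 1$, which (since $T_k$ is an integer) forces $T_k = 0$. Thus the greedy process terminates within $1 + \lfloor \log_b T \rfloor$ iterations, and the resulting set $V$ has size at most $1 + \lfloor \log_b T \rfloor$. By construction every term of $\phi$ is hit by some variable in $V$ (as its positive occurrence), which is exactly the conclusion.

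Finally, for nonemptiness of $V$: since $T \ge 1$, at least one iteration is performed before termination, so at least one variable is added to $V$. (Even in the edge case $T=1$ we have $1 + \lfloor \log_b 1 \rfloor = 1$, matching the single variable picked.)

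There is no real obstacle here; the one place to be careful is the floor bookkeeping, namely verifying that $k = 1 + \lfloor \log_b T \rfloor$ is enough to guarantee $b^k > T$ strictly, which is what drives $T_k$ down to $0$ rather than merely below $T$. Everything else is a direct averaging-plus-geometric-decay argument.
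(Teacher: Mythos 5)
Your greedy hitting-set argument is correct and is exactly the ``simple greedy argument'' the paper alludes to (the paper itself does not spell out a proof, deferring to Lemma~6 of Angluin~\cite{Angluin:90}). The averaging step, the geometric decay $T_k \le T/b^k$, the floor bookkeeping showing $b^{1+\lfloor \log_b T\rfloor} > T$, and the nonemptiness from $T \ge 1$ are all handled properly.
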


Let $f$ be a $T$-term DNF formula.  Since by assumption we have $T \geq 1$, there is at least one term in $f$ and hence at least one positive assignment $y$ for $f$.  If $r \geq \sqrt{n}$ then clearly this
positive assignment $y$ has $|y| \leq
r \sqrt{n}$, so the lemma holds for $r \geq \sqrt{n}$.  Thus we may henceforth assume that
$r < \sqrt n.$

Let $\alpha = {\frac {r} {\sqrt{n}}}$ (note that $0 < \alpha < 1$ as required by Claim~\ref{claim:angluin}).
If there is some term of $f$ with fewer than $\alpha n = r \sqrt{n}$ distinct unnegated variables,
then we can obtain a positive assignment $y$ for $f$ with $|y| < r \sqrt{n}$ by
setting exactly those variables to 1 which are unnegated in this term and setting
all other variables to 0.  So we may suppose that every term of $f$ has at least $\alpha n$
distinct unnegated variables.  Claim~\ref{claim:angluin}  now implies that
there is a nonempty set $V$ of at most
$$
1 + \lfloor \log_{1/(1 - r/\sqrt{n})} T \rfloor \leq 1 + {\frac {\sqrt{n}} r} \ln T
$$
variables $V$ such that each term of $f$ contains a positive
occurrence of some variable in $V$.  The assignment $z$ which sets all and only the variables in $V$ to 0 is a negative assignment
with $n > |z| \geq n - (\sqrt{n} \ln T)/r - 1$ (note that
$n > |z|$ because $V$ is nonempty), and Lemma~\ref{lem:l1} is proved.
\end{proofof}

\subsection{Proof of Claim~\ref{claim:maj}}
\label{proofclaim}

\begin{proof}
Let functions $f_{i_1},\dots,f_{i_t}$ be drawn independently and uniformly
from $CON$.  (Note that $t \geq 1$ by the bound $k \leq {\frac {3n}{\log n}}.$)  We show that with nonzero probability the resulting list
of functions has the claimed property.

Fix any $a \in Z.$  The probability that
$\Maj(f_{i_1},\dots,f_{i_t})$ disagrees with $\Maj(f_1,\dots,f_N)$ on $a$ is easily seen to be at most
$$
\binom{t}{t/2} \left({\frac 1 {n^k}}\right)^{t/2} < {\frac {2^t} {n^{kt/2}}}.
$$
Recalling that $t={\frac {3n}{k \log n} }$, this is less than $1/2^n$ for all $1 \leq k \leq n.$
Since there are at most $2^n$ assignments $a$ in $Z$, a union bound over all $a \in Z$ gives that
with nonzero probability (over the random draw of $f_{i_1},\dots,f_{i_t}$) the function
$\Maj(f_{i_1},\dots,f_{i_t})$ agrees with $\Maj(f_1,\dots,f_N)$ on all assignments in $Z$
as claimed. \end{proof}

\subsection{Proof of Theorem~\ref{thm:needmq}}
\label{prooftheorem}

\begin{proofof}{Theorem~\ref{thm:needmq}}  
Let $M(n,t,s)$ be the class of all monotone DNF over $n$ variables with exactly
$t$ distinct terms each of length exactly $s$.  Fix
$s$ and $t$ both to be $\log n.$ We will show that given any DNF formula with
at most $n^{\log n}$ terms, there is an assignment such that at most a $1/n^{(\log n)/3}$ fraction
of the DNFs in $M(n,t,s)$ agree with $f$ on that assignment; this implies the theorem by the arguments
of Theorem~\ref{thm:tradeoff}.  

Let $f$ be any DNF formula with at most $T=n^{\log n}$ terms.
Applying Lemma~\ref{lem:l1} to $f$ with $r=1$, we may
conclude that either there is an assignment $y$ with $|y| \leq \sqrt{n}$
and $f(y)=1,$ or there is an assignment $z$ with $n > |z| \geq n - \sqrt{n} (\log n)^2$ and $f(z)=0.$

Let $\phi$ be a DNF formula randomly and uniformly selected from $M(n,t,s).$
All probabilities below refer to this draw of $\phi$ from $M(n,t,s).$

We first suppose that there is an assignment $y$ with $f(y) = 1$ and $|y| \leq \sqrt{n}.$
The probability that any fixed term of $\phi$
(an AND of $s$ randomly chosen variables)
is satisfied by $y$ is exactly
$$
{\frac
{{\binom{|y|}{s}}}
{{\binom{n}{s}}}  \leq
{\frac
{{\binom{\sqrt{n}}{s}}}
{\binom{n}{s}}}} < \left( {\frac 1 {\sqrt{n}}}\right)^s = {\frac 1 {n^{(\log n)/2}}}.
$$
A union bound gives that $\Pr_\phi[\phi(y)=1] \leq t \cdot {\frac 1 {n^{(\log n)/2}}} <
{\frac 1 {n^{(\log n)/3}}}.$  So
in this case $y$ is an
assignment such that at most a ${\frac 1 {n^{(\log n)/3}}}$
fraction of formulas in $M(n,t,s)$ agree with $\phi$ on $y.$

Next we suppose that there is an assignment $z$ with $f(z)=0$ and
and $n > |z| > n - \sqrt{n} (\log n)^2.$  Since $s=t=\log n$ and
and $|z| \leq n-1$, we have that $t \leq \binom{n}{s} - \binom{|z|}{s}$ as required
by Fact~\ref{fact:binom}.  Applying Fact~\ref{fact:binom}, we get that
\begin{eqnarray*}
\Pr_{\phi}[\phi(z)=0] &\leq&
\left(
1 - \left({\frac {n -  \sqrt{n} (\log n)^2 - \log n }{n}}\right)^{\log  n}\right)^{\log n} \\
&<&
\left(
1 - \left({\frac {n -  2\sqrt{n} (\log n)^2 }{n}}\right)^{\log  n}\right)^{\log n}\\
&=&
\left(
1 - \left(1 - {\frac {2(\log n)^2}{\sqrt{n}}}\right)^{\log  n}\right)^{\log n}\\
&\leq&
\left(
1 - \left(1 -  {\frac {2(\log n)^3 }{\sqrt{n}}}\right)\right)^{\log n}\\
&=&
\left({\frac {2(\log n)^3 }{\sqrt{n}}}\right)^{\log n}
< \left({\frac 1 {n^{1/3}}}\right)^{\log n} =  {\frac 1 {n^{(\log n)/3}}}.
\end{eqnarray*}
So in this case $z$ is an assignment such that at most a
$1/n^{(\log n)/3}$ fraction of formulas in $M(n,t,s)$ agree
with $\phi$ on $z.$
This concludes the proof of Theorem~\ref{thm:needmq}.
\end{proofof}

\end{document}